\documentclass{article} 
\usepackage{iclr2027_conference,times}
\usepackage{wrapfig}

\usepackage[T1]{fontenc}
\usepackage{amsmath,amssymb,mathtools,amsthm}
\usepackage{bm} 
\usepackage{graphicx}
\usepackage{microtype}
\usepackage{hyperref}
\usepackage{url}
\usepackage[font=small,skip=4pt]{caption}
\usepackage{placeins} 

\newcommand{\E}{\mathbb E}
\newcommand{\R}{\mathbb R}
\newcommand{\norm}[1]{\left\lVert #1\right\rVert}

\newcommand{\abs}[1]{\left\lvert #1\right\rvert}

\newcommand{\conj}[1]{\overline{#1}}

\newtheorem{theorem}{Theorem}[section] 
\newtheorem{lemma}[theorem]{Lemma}      
\newtheorem{corollary}[theorem]{Corollary}
\title{A Mathematical Theory of the Jacobian Lens}

\author{
  Mitchell Ostrow \\
  Department of Brain and Cognitive Sciences \\
  Massachusetts Institute of Technology \\
  \texttt{ostrow@mit.edu} \\
  \And
  Daoyuan Qian \\
  Department of Brain and Cognitive Sciences \\
  Massachusetts Institute of Technology \\
  \AND
  Toni J.B. Liu \\
  Department of Physics \\
  Cornell University
  \And
  Ila Fiete \\
Department of Brain and Cognitive Sciences \\
  Massachusetts Institute of Technology \\
}

\iclrfinalcopy 
\begin{document}

\maketitle
\lhead{Preprint} 

\begin{abstract}
Recent work from Anthropic introduced a new interpretability tool, called the Jacobian lens (J-lens, \citealp{gurnee2026verbalizable}). This method identified linear directions in the hidden states of frontier language models that had properties reminiscent of working memory-like representations. 
However, it is unclear what mechanistic properties of such features make them detectable to the J-lens. 
To this end, we introduce a mathematical theory of the J-lens, and show that it preferentially identifies \emph{coherent} features -- directions that act like leading, spatially coherent Lyapunov modes. Small perturbations along these directions amplify across layers, remain roughly constant across token positions, and are consistent across different contexts. 
While we are agnostic to whether these properties are necessarily associated with a ``Global Workspace'', the analytically derived preferences of the J-lens nevertheless explain prior empirical applied to both model classes. 
We validate our analysis on a Bayes filter and a toy transformer, each optimally predicting next tokens emitted from a Hidden Markov Model (HMM). 
We show how modulating task parameters that change the coherence of features in the Bayes filter correspondingly and predictively modulates the response of the J-lens . We also introduce of a family of J-lenses built to emphasize different temporal scales of information processing in LLMs, which we call $\omega$-J-lenses.
\end{abstract}

\vspace{-4mm}
\section{Introduction}
\vspace{-3mm}

In a recent paper, \citet{gurnee2026verbalizable} introduced a novel interpretability technique that they claimed could identify an internal `workspace' in large language models (LLMs), which had properties reminiscent of the Global Workspace \citep{baars1988cognitive,dehaene1998neuronal}. The object they primarily study is the `Jacobian lens', or `J-lens'. This interpretability tool is designed in the spirit of other `lenses' \citep{nostalgebraist2020logitlens,belrose2023tuned}, which seek to decode transformer hidden state activations into language by applying the unembedding matrix of the model to these activations (potentially after a learned affine map, as in the tuned lens). These lenses benefit interpretability by mapping hidden states to tokens, which are in the space of language and hopefully more interpretable. Decoding intermediate layers this way has shown that predictions are refined gradually across depth \citep{nostalgebraist2020logitlens,belrose2023tuned}, that multilingual models pass through an English-like latent space before producing the target language \citep{wendler2024llamas}, that false few-shot demonstrations derail predictions only after a critical layer \citep{halawi2024overthinking}, and that a single hidden state already carries information about tokens several positions ahead \citep{pal2023future}. The trajectory of tuned-lens predictions across layers can also detect malicious inputs \citep{belrose2023tuned}.

The Jacobian lens (J-lens, \citealp{gurnee2026verbalizable}) continues this line of work by introducing a new transfer operator onto the residual stream vector before the application of the unembedding matrix and mapping to tokens. This operator is the Jacobian of the transformer, averaged over tokens. In their paper, the motivation for the usage of the Jacobian $\bm K$ is based on standard results in calculus: A Jacobian describes how a small perturbation in the input of a function results in a change in the output. When the Jacobian is studied in an LLM, it provides directions in the residual stream that are ``verbalizable'' \citep{gurnee2026verbalizable}, which is to say that it identifies how perturbations change the linguistic output of the model. However, this property holds only in the local neighborhood of the point on which the Jacobian is defined, and the J-lens is defined by averaging over prompts, tokens, and future positions in the output layer. An average Jacobian has no reason to behave like any of its elements. Indeed, if the Jacobians of a system form a curved manifold, the average Jacobian can easily lie off-manifold. More generally, well-known features that describe the behavior of linear operators (e.g., singular vectors and values, which are nonlinear functions of a matrix) can deviate grossly from those of individual Jacobians under averaging (Appendix \ref{app: avg_jac}). In Figs. \ref{fig: real_jac} and \ref{appfig: real_jac_svecs} we visually demonstrate this phenomenon for four open source LLMs. What, then, does the J-lens capture? 
\begin{wrapfigure}{r}{0.5\textwidth}
    \centering
    \includegraphics[width=\linewidth]{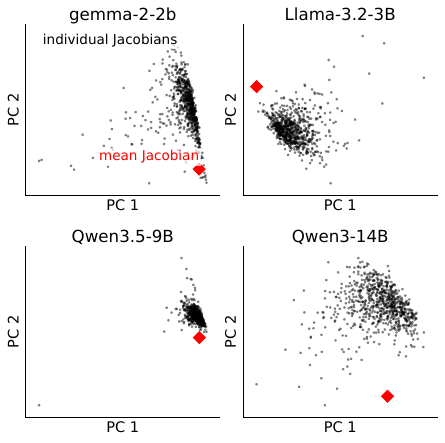}
    \caption{Low-dimensional projection (principal component analysis, PCA) of the top 100 singular values for individual (black) and mean (red) Jacobians of four different LLMs (Appendix~\ref{app: real_jacobians}). Each Jacobian is evaluated with input at an intermediate layer chosen to be halfway through the model and output at the final layer. Prompts are sampled from a range of datasets with different topics (Wikipedia, Python code, math, news, etc.). Jacobian spectra are plotted in Fig. \ref{fig: jac_spectra}.}
    \label{fig: real_jac}
\end{wrapfigure}

Recent work \citep{yan2026short} has attempted to analyze the J-lens as an approximate least squares map from the hidden state to the output using Stein's lemma \citep{stein1981estimation}. However, Stein's lemma only applies when the input data is sampled from a Gaussian distribution, and transformer hidden states are notoriously non-Gaussian, with heavy-tailed outlier features and a few activations that are orders of magnitude larger than the rest \citep{dettmers2022llmint8,sun2024massive}. Rather, the J-lens is most directly related to a branch of dynamical systems theory called \textit{Lyapunov theory}, which describes how perturbations grow or shrink in spatiotemporal dynamical systems.

\paragraph{The Jacobian lens.}
Let $\bm h_{\ell,t}(x)\in\R^d$ be the activation at layer $\ell$ and token
position $t$ in a fixed sequence $x$, and let $L$ be the final layer. Write $\tau$ to represent a lag in token positions, $H$ for the number of lags averaged over, and $c=(x,t)$ for a context. Define
\begin{equation}
 \begin{aligned}
 \bm K(c,\ell,\tau)&:=\frac{\partial \bm h_{L,t+\tau}(x)}
 {\partial \bm h_{\ell,t}(x)},\\
 \bm G_c(\ell,H)&:=\frac1H\sum_{\tau=0}^{H-1}\bm K(c,\ell,\tau) \\ \bm J_H(\ell)&:=\E_c\bm G_c(\ell,H).
 \end{aligned}
 \label{eq:jlens}
\end{equation}

The lens is then applied to a state, along with the unembedding matrix:

\begin{equation}
\mathrm{lens}(\bm h_{\ell,t}) = \mathrm{softmax}\big(\bm W_U\,\mathrm{norm}(\bm J_{H}(\ell)\,\bm h_{\ell,t})\big)
\end{equation}
where $\bm W_U$ is the model's unembedding matrix.

\paragraph{Lyapunov exponents.}
Let $\delta\bm h(T)$ be a perturbation of a state evolved for $T$ steps under the dynamics. Its Lyapunov exponent \citep{oseledets1968multiplicative,eckmann1985ergodic} measures asymptotic growth of the perturbation:
\begin{equation}
 \chi=\lim_{T\to\infty}\frac1T
 \log\frac{\norm{\delta\bm h(T)}}{\norm{\delta\bm h(0)}},
 \qquad \delta\bm h(0)\ne\bm 0,
 \label{eq:lyapunov}
\end{equation}
when the limit exists. At finite $T$, the finite-time Lyapunov exponents are
\begin{equation}
 \chi_i(T)=\frac1T\log s_i(T),
 \label{eq:finite-lyapunov}
\end{equation}
where $s_i(T)$ are singular values of the full $T$-step Jacobian. For dynamical systems, this is typically computed by chaining with multiplication each one-step Jacobian with iterative renormalization \citep{benettin_lyapunov_1980,engelken2024}.
Spatiotemporal versions also resolve how a perturbation travels across space:
the convective Lyapunov exponent
\citep{deissler1987velocity,kenfack2013convective} measures the growth of a perturbation in a reference frame that is moving in space. Hydrodynamic Lyapunov modes
\citep{eckmann2000hydrodynamic} are long-wavelength Lyapunov vectors whose exponents approach zero as
their wavenumber $k$ approaches zero. 

For transformers, depth is the evolution coordinate ($T$) and token position
the spatial coordinate. A hydrodynamic interpretation requires identifying long wavelength with growth near zero.
We'll show here that the J-space is tightly linked to Lyapunov theory. Under the assumptions below, the J-lens favors spatially smooth, slowly varying responses, because it by construction integrates out the fast dynamics, implicitly behaving like a low-pass filter in space. With this connection in hand, we show that many of the results in the original paper follow naturally. Note that this is not an argument that it does not identify a Global Workspace per se, but rather to shed light on what this method is designed to capture and why that may or may not align with the philosophical conception of the Global Workspace.

We begin by constructing a model that establishes this connection, and subsequently relax assumptions of the model. 

\paragraph{Shared and context-specific modes.}
Suppose our Jacobians can be decomposed as follows:
\begin{equation}
 \begin{aligned}
 \bm K(c,\ell,\tau)
 &=
 \underbrace{\sum_{i} e^{T\chi_i+\tau\gamma_i}\,
 \bm u_i\bm v_i^*}_{\bm S(\ell,\tau):\,\text{shared modes}}
 +
 \underbrace{\sum_{j} e^{T\chi_{cj}+\tau\gamma_{cj}}\,
 \bm u_{cj}\bm v_{cj}^*}
 _{\bm R(c,\ell,\tau):\,\text{context-specific modes}},
 \end{aligned}
 \label{eq:model}
\end{equation}
with $T=L-\ell$ the number of layers between the read-in layer and the output, and every mode quantity depends on $\ell$, which we suppress. $\gamma_i=\mu_i+\mathrm{i}\theta_i$ describes the cross-token evolution dynamics.
The shared input directions $\bm v_i$ and output directions $\bm u_i$ are each orthonormal. Each mode has two finite-time Lyapunov exponents: $\chi_i\in\R$ in depth and $\mu_i\in\R$ in token position, while $\theta_i$ is the mode's rotation per token. At a fixed $(\ell,\tau)$, any matrix admits a singular value decomposition (SVD).
Sharing directions across contexts, factoring depth (Fig. \ref{fig: schematic}a) and lag (Fig. \ref{fig: schematic}b) in this
way, and treating the two exponents as independent are additional assumptions of this illustrative model. A Lyapunov mode is a mode that evolves according to a Lyapunov exponent, and hence is defined by its input direction $\bm v_i$.
Averaging over lags and contexts splits the lens into a shared part and an averaged remainder, $\bm J_H=\bm S_H+\E_c\bm R_c$, in which shared mode $i$ is weighted by $e^{T\chi_i}f_H(\gamma_i)$, with
\[
 f_H(\gamma)=\frac1H\sum_{\tau=0}^{H-1}e^{\tau\gamma}
\]
where $H$ denotes the maximum lag between input and output token positions (Appendix~\ref{app:mode-model}). We also assume the context-specific response is no larger than the shared one: $\E_c\norm{\bm R_c}_2^2\leq M^2$, where $M=\norm{\bm S_H}_2$.

\paragraph{Isolation of slow modes.}
The future averaging $f_H(\gamma)$ suppresses modes that decay rapidly or oscillate quickly. For $\mu_i\leq0$, $|f_H(\gamma_i)|\leq1$. Slow modes with
$H|\mu_i|,H|\theta_i|\ll1$ retain $f_H(\gamma_i)\approx1$, while rapid decay
or oscillation is attenuated. Another way to see this is to consider the Fourier decomposition of a response $y(\tau)$ over the window:
\begin{align}
 \widetilde y_H(\omega)&=\sum_{\tau=0}^{H-1}
 e^{-\mathrm{i}\omega\tau}y(\tau),\\
 \frac1H\widetilde y_H(0)&=\frac1H\sum_{\tau=0}^{H-1}y(\tau).
 \label{eq: fourier}
\end{align}
Thus averaging over time extracts the zero-frequency coefficient of the finite
window, thereby identifying slowly varying responses in space (token position). Similarly, time (layers) weights modes by $e^{T\chi_i}$, thereby favoring growing
or slowly decaying modes. This is not necessarily a strong weighting, especially closer to the output when the Jacobian becomes increasingly aligned with the identity map. 

\paragraph{Incoherent directions are suppressed.}
For a unit input direction $\bm v$ and independent contexts $c,c'$, the averaged remainder is $\norm{(\E_c\bm R_c)\bm v}_2^2=\E_{c,c'}\langle \bm R_c\bm v,\bm R_{c'}\bm v\rangle$, which quantifies the expected overlap between two contexts' responses to the same input (Appendix~\ref{app:mode-model}). Suppose this overlap is small,
$\sup_{\norm{\bm v}=1}\E_{c,c'}|\langle \bm R_c\bm v,\bm R_{c'}\bm v\rangle|
\leq\eta\,\E_c\norm{\bm R_c}_2^2$ with $0\leq\eta < 1$.
Taking the largest growth over unit inputs and assuming bounded scales of the modes,
\begin{equation}
 \boxed{\begin{aligned}
 \norm{\E_c\bm R_c}_2^2
 &=\sup_{\norm{\bm v}=1}\E_{c,c'}\langle \bm R_c\bm v,\bm R_{c'}\bm v\rangle\\
 &\leq\sup_{\norm{\bm v}=1}\E_{c,c'}|\langle \bm R_c\bm v,\bm R_{c'}\bm v\rangle|\\
 &\leq\eta\,\E_c\norm{\bm R_c}_2^2
 \leq\eta M^2=\eta\norm{\bm S_H}_2^2.
 \end{aligned}}
 \label{eq:suppression}
\end{equation}
Thus for small overlap $\eta$, averaging suppresses the incoherent response relative to the shared
response. Since
$\bm J_H=\bm S_H+\E_c\bm R_c$, its leading singular directions approach those of $\bm S_H$
as $\eta\to0$.

\paragraph{Relaxing exact shared modes.}
The above model shows that if spatially (in a transformer, across tokens) and contextually coherent Lyapunov modes exist, these modes dominate in the Jacobian lens. However, exact shared directions need not exist. We show next under a more general construction that the J-lens directions preferentially select Lyapunov modes that (a) maximally grow over depth $T$ (magnitude $a$), (b) are maximally coherent across token lags $\tau$ (lag coherence $\rho_\tau$), and (c) are maximally coherent across different contexts $c$ (context coherence $\rho_c$). 
Using the same SVD notation, write
\begin{equation}
 \bm G_c=\E_\tau\bm K(c,\ell,\tau)  = \sum_j s_{cj}\bm u_{cj}\bm v_{cj}^*,
 \qquad s_{c1}\geq s_{c2}\geq\cdots\geq0.
 \label{eq:full-svd}
\end{equation}
For a unit input direction $\bm v$, define
\begin{equation}
 \begin{aligned}
 a(\bm v)&:=\E_{c,\tau}\norm{\bm K(c,\ell,\tau)\,\bm v},\\[2pt]
 \rho_\tau(\bm v)&:=\frac{\E_c\norm{\bm G_c\bm v}}
                    {\E_{c,\tau}\norm{\bm K(c,\ell,\tau)\,\bm v}}\in[0,1],\\[2pt]
 \rho_c(\bm v)&:=\frac{\norm{\E_{c,\tau}\bm K(c,\ell,\tau)\,\bm v}}
               {\E_c\norm{\bm G_c\bm v}}\in[0,1].
 \end{aligned}
 \label{eq:coherence}
\end{equation}
We call these the \textit{coherence parameters} for directions in the Jacobian lens. We use the 2-norm for each parameter here, but other norms could reasonably apply as well. $a(\bm v)$ is the average response magnitude, $\rho_\tau(\bm v)$ is the lag coherence of a response, and $\rho_c(\bm v)$ is the context coherence. These coherence parameters can be understood intuitively through the lens of spatiotemporal Lyapunov modes: $a$ describes the canonical notion of a Lyapunov exponent, here how a direction grows over time (layers). $\rho_\tau$ describes how activity along a direction oscillates across spatial (token) positions, with $\rho_\tau = 1$ indicating no oscillation. Lastly, $\rho_c$ describes how consistently direction is read in and written out across different contexts (instantiations of Jacobian), given the same vector. 

We illustrate what each coherence parameter measures in Fig. \ref{fig: schematic}. Intuitively, magnitude measures the total effect of a perturbation along a feature direction across both layers and time, thereby capturing how persistent or relevant it is. Lag coherence measures how the response of a perturbation changes across token positions, for which large values are approximately static, such as the overall topic of a document, whereas incoherent features change rapidly, such as word type. Context coherence measures whether a perturbation along a direction responds similarly in the same way for different contexts, and how prior context modulates its effect. For example, adding in a new rule such as ``today is opposite day'' into context requires reprocessing each feature with this in mind. Context-coherent directions are processed consistently across contexts, while incoherent directions change their behavior. 

\begin{figure}
    \centering
    \includegraphics[width=\linewidth]{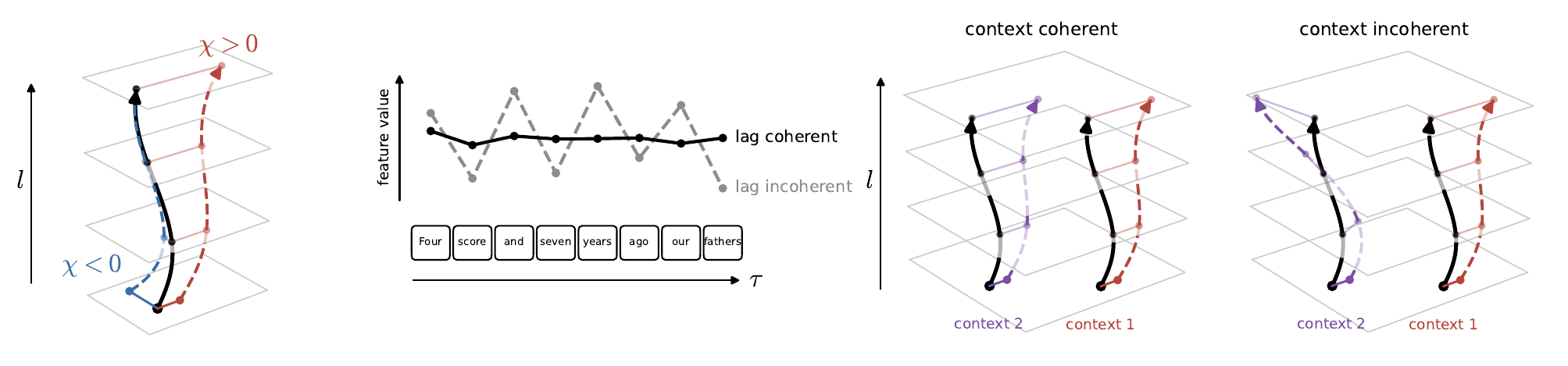}
    \caption{Schematic of the three coherence parameters. 
    \textbf{(left)} Schematic of large $\chi >0$ versus small $\chi <0$ magnitude directions. A perturbation grows across layers in the red direction, but shrinks in the blue direction. 
    \textbf{(middle)} Schematic of lag-coherent (black) versus lag-incoherent (gray) feature directions across token positions. 
    \textbf{(right)} Context-coherent directions evolve similarly across layers in contexts, while context-incoherent directions evolve differently.}
    \label{fig: schematic}
\end{figure}

Define $\rho_c = 0$, $\rho_\tau = 0$ when their denominators are zero. 
The three factors telescope via multiplication to yield $\norm{\bm J_H\bm v}$. The leading singular value of the lens is therefore
\begin{equation}
 \boxed{\sigma_1(\bm J_H)=\max_{\norm{\bm v}=1}\rho_c(\bm v)\,\rho_\tau(\bm v)\,a(\bm v).}
 \label{eq:ranking}
\end{equation}
Subsequent directions maximize the same quantity subject to orthogonality
to prior input directions $\bm v_1,\dots,\bm v_{i-1}$. This shows that the J-lens preferentially upweights directions with the largest combinations of magnitudes $a(\bm v)$ and coherences $\rho_\tau(\bm v)\,\rho_c(\bm v)$. In the notation of Eq.~\ref{eq:model}, the J-lens selects the modes with the largest growth across layers $\chi_i$ (magnitude), spatial changes that do not oscillate much ($\theta_i \approx 0$), and $\bm u_i$, $\bm v_i$ that change minimally across contexts (context coherence). \citet{gurnee2026verbalizable} found that the J-space summarized only a small portion of the variance. This observation is easily explained by recognizing that variance explained is unrelated to each of the factors in Eq.~\ref{eq:ranking}, which we formalize now:

\begin{corollary}\label{cor:variance}
 A direction selected by Jacobian averaging may explain arbitrarily little or arbitrarily much activation variance. Consequently, the singular values of \(\bm J_H\) do not determine the variance explained by its selected directions.
\end{corollary}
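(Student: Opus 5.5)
The proof is a construction: the Jacobians $\bm K(c,\ell,\tau)$ and the distribution of activations $\bm h_{\ell,t}$ can be chosen independently, so the J-lens geometry and the variance geometry can be decoupled. Concretely, I will fix a simple family of Jacobians that is constant across contexts and lags, $\bm K(c,\ell,\tau)=\bm K_0$ for all $c,\tau$. Then $\bm G_c=\bm K_0$ and $\rho_\tau(\bm v)=\rho_c(\bm v)=1$ for every unit $\bm v$ with $\bm K_0\bm v\neq 0$, so Eq.~\ref{eq:ranking} reduces to $\sigma_1(\bm J_H)=\max_{\norm{\bm v}=1}\norm{\bm K_0\bm v}$. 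I take $\bm K_0=\mathrm{diag}(s_1,\dots,s_d)$ with $s_1>s_2\ge\cdots$, so the selected top input direction is $\bm e_1$, and subsequent directions are $\bm e_2,\bm e_3,\dots$ with singular values $s_i$.

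The activation statistics will then be specified separately. Let the activation covariance be $\bm\Sigma=\mathrm{diag}(\lambda_1,\dots,\lambda_d)$. The fraction of variance explained by $\bm e_1$ is $\lambda_1/\sum_k\lambda_k$. Choosing $\lambda_1=\epsilon$ and $\lambda_k=1$ for $k\ge2$ makes this fraction tend to $0$ as $\epsilon\to0$. Choosing $\lambda_1=1$ and $\lambda_k=\epsilon$ for $k\ge 2$ makes it tend to $1$. In both cases the singular values $s_i$ of $\bm J_H$ are unchanged, which gives the second sentence of the corollary: the two configurations have the same $\bm J_H$ spectrum but opposite variance profiles.

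The step that needs care is consistency. In a real network the Jacobian is evaluated at the activations, so one could object that $\bm K$ and $\bm\Sigma$ cannot be set independently. The cleanest way around this is a linear map, for example a residual stack with linear layers $\bm h_{L,t+\tau}=\bm K_0\bm h_{\ell,t}+(\text{terms independent of }\bm h_{\ell,t})$. Its Jacobian is $\bm K_0$ everywhere, whatever the input distribution. Under this map any Gaussian (or other) distribution of $\bm h_{\ell,t}$ with covariance $\bm\Sigma$ is admissible. I will also check that the lag and context averaging in Eq.~\ref{eq:jlens} leave $\bm K_0$ unchanged, which holds trivially.

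To make the ``arbitrarily'' claim robust beyond this degenerate coherent case, I will also note that the variance fraction is a function of $\bm\Sigma$ alone, while every factor $a$, $\rho_\tau$, $\rho_c$ in Eq.~\ref{eq:ranking} depends only on the Jacobians. So the same rescaling of $\bm\Sigma$ applies verbatim to any Jacobian family, provided the family does not depend on the activation distribution, which is exactly what the linear construction guarantees.
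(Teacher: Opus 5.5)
Your proposal is correct and is essentially the paper's argument: you realize the Jacobians by fixed linear maps so they are independent of the activation covariance, then tune the covariance eigenvalue along the selected direction $\bm e_1$ to drive $\bm e_1^*\bm C\bm e_1/\operatorname{tr}\bm C$ to $0$ or $1$ while the spectrum of $\bm J_H$ stays fixed. The one difference is that the paper's 2D family has a second direction that flips sign across contexts and lags, so averaging itself removes it and $\bm J_H=s\,\bm e_1\bm e_1^*$ is rank one; this lets the paper also conclude that the whole span of the token-indexed directions $\bm j_q$ captures arbitrarily little variance, which your constant $\bm K_0$ with several nonzero $s_i$ does not give (unless you take $s_2=\cdots=s_d=0$), though it is enough for the corollary as stated.
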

The proof is a two-dimensional counterexample and is given in Appendix~\ref{app:corollary-proof}.

\subsection{Mapping coherent modes onto tokens} 
The analysis above characterizes the singular modes selected by Jacobian averaging, whereas the J-space used in the original paper is constructed from token-indexed directions $\bm j_q$ and is therefore not simply the leading singular subspace of $\bm J_H$. These descriptions can nevertheless be connected exactly. Let $\bm J_H=\bm U\bm\Sigma\bm V^*$, and let $\bm w_q$ denote the unembedding vector for token $q$. The corresponding J-lens direction in the input residual stream is
\begin{equation}
\bm j_q:=\bm J_H^*\bm w_q
\label{eq:jlens-token}
=\bm V\bm\Sigma\bm U^*\bm w_q
=\sum_i \sigma_i\langle \bm u_i,\bm w_q\rangle \bm v_i.
\end{equation}
Thus every token-indexed J-lens direction is a mixture of the right singular modes $\bm v_i$: the singular value $\sigma_i$ weights the strength of the corresponding coherent response channel, while $\langle \bm u_i,\bm w_q\rangle$ measures how strongly that channel's downstream response aligns with the token's unembedding direction. This connection matters because the J-space is operationally defined by approximating an activation as a sparse non-negative combination of the directions $\bm j_q$ \citep{gurnee2026verbalizable}. The sparse decomposition therefore selects a small number of semantically labeled mixtures of the coherent modes that survive averaging. In this sense, the singular structure of $\bm J_H$ supplies the underlying dynamical geometry, while the unembedding and sparse non-negative decomposition translate that geometry into the token-level ``concepts'' reported by the J-lens.

\subsection{Families of Jacobian Lenses} \label{subsec: omega lens}

Our analysis suggests that the Jacobian lens can be modified to extract different types of information. Here, we focus on the modulation of lag coherence selectivity. Eq. \ref{eq: fourier} highlights that averaging over time points is equivalent to a Fourier projection onto the zero-frequency mode. Modulating the $\omega$ parameter in this projection (or even generalizing to band-pass filtering) induces a family of lenses we call $\omega$-J-lenses, where $\omega \in [-\pi,\pi]$ given the Nyquist limit for sampling frequency $1$ / token. We demonstrate their selectivity in our toy task in Fig.~\ref{fig: omega-lenses}.

\section{Toy Model}

We designed a simple toy task to study the J-lens for models performing next-token prediction, and demonstrate its behavior on optimal Bayes filters as well as trained transformers. Our task, inspired by recent work studying how transformers learn to track hidden states \citep{shai2024transformers}, is a hidden Markov model (HMM) with several independent latent states $X^{(i)}$, each arranged on a ring of $n_i$ discrete states: $X^{(i)} \in \{0,1,\dots,n_i -1\}$. We design the dynamics to be the simplest model that can selectively modulate each of the coherence factors via the task parameters. The dynamics of each latent state are (suppressing the latent index for clarity):
\begin{equation}
    X_{t+1} = \begin{cases}
    X_{t} + v + D_t  \mod n\quad \text{w.p. } \lambda \\
    U_t \quad \text{w.p. } 1-\lambda
    \end{cases}
\end{equation}
Here $\lambda$ is called the state persistence parameter, $D_t \in \{-1,0,1\}$ with probabilities $q, 1-2q, q$ and $U_t \sim \mathrm{Uniform}\{0,\dots,n-1\}$. We will see how changing $\lambda$ modulates magnitude $a$ by determining for how many positions a token is useful to estimate the state of the latents. 
The integer $v$ is the velocity of the state on the ring, $\theta = 2\pi v / n$ is its angular velocity, and $D_t$ is a diffusion term. Lag coherence measures whether a state drives all future states in the same way, and $v$ modulates this by increasing misalignment across positions.
With a fully observed state at all timepoints, however, the model is not forced to integrate over time (filtering), which would make the Jacobian $0$ across all timepoints. To force integration, we reveal the states with probability $\varepsilon$, via another random variable $B_t\in\{0,1\}$. Each latent state is combined into a token by concatenating state index into an $n$-ary variable and converting that to a token index $Y_t$, making a total of $1+\prod_i n_i$ tokens, where the additional token is a null token (states unobserved). In the following, we use three latents with $n_i \in \{2,8\}$ for simplicity. We will show how this model setup allows us to selectively modulate the magnitude $a$ (through $\lambda$) and lag coherence $\rho_\tau$ (through $\theta$) of the Jacobian lens vectors. Note that this toy model is easily solved by a transformer, and indeed even the embeddings learn to represent the latent states. The main result of our analysis is that we can \textit{a priori predict the order of the modes and their approximate coherence parameters in the Jacobian lens of the transformer.} 

\subsection{Introducing contextual dependency} \label{sec: keystate}
So far, all tokens across all random sequences for a given model are integrated the same way. This means that the \textit{context coherence} $\rho_c$ of the Jacobian lens vectors cannot be modulated. To introduce this, we add a new variable called a \textit{key} state $W_{t} \in \{-1,+1\}$ where the observed state $Y_t= X_t + \frac{n}{2} \mathbf{1}(W_{t-1} = -1)$, and $P(W_t = 1) = (1 + \psi)/2$, with key bias $\psi\in[0,1]$. This state is always fully observed, forcing the models to look back one timestep to identify how to infer the state of the observed token at the present timestep. This directly modulates context coherence because it forces the read-in and read-out directions of a feature to change dependent on the context.
Each keyed state has its own independent key, making $(1+2\prod_i n_i)\,2^{|\mathcal K|}$ tokens, where $\mathcal K$ is the set of latents with keys.

\section{The Bayes Filter and its Jacobian Lens}

Prior work \citep{shai2024transformers} has found that transformers trained on HMMs learned to represent optimal Bayesian beliefs of the hidden states in their residual stream. We follow this line of work by deriving the Bayes filter for the above HMM, its Jacobian lens, and then show that the Jacobian lens captures belief states \textit{ordered by their coherence parameters} in the transformer as well. 
Because each state exists on a ring, we account for rotational symmetry by studying the Fourier modes $\phi_j(x) = e^{2\pi \mathrm{i} j x /n}$. The Bayes filter optimally integrates the given evidence into a \textit{belief state}, which is defined as the expected value of a state given the history:
\begin{align}
m^{(j)}_t&:=\E[\phi_j(X_{t+1})\mid Y_1,\dots,Y_t] \\ m^{(j)}_0 &:=0 .
\end{align}

The belief state for a given Fourier mode of a given state is updated by unkeyed evidence as
\begin{equation}
    m^{(j)}_{t} = \alpha_j \big[(1-B_t)m^{(j)}_{t-1} + B_t \phi_j(X_t)\big]
\end{equation}
where $\alpha_j = \lambda c_j e^{\mathrm{i}\theta_j}$, with $\theta_j = j\theta$ the rotation of mode $j$ per step and $c_j = 1 - 2q + 2q\cos(2\pi j/n)$ the diffusion factor (Appendix~\ref{app: bayes}). This update rule is locally linear, either updating with the expected evolution operator $\alpha_j$ if no evidence is observed, and replacing the state with the observation otherwise. The graphical model of the task and the Bayes filter are schematized in Fig. \ref{fig:task schematic}.

\begin{figure}[t]
    \centering
    \includegraphics[width=\linewidth]{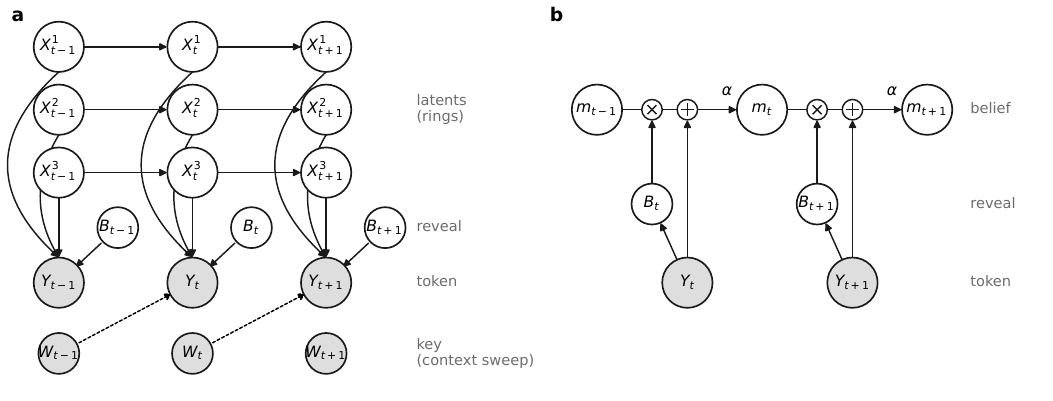}
    \caption{Graphical models of the task \textbf{(a)} and the Bayes filter \textbf{(b)}. \textbf{(a)} Three latents $X^{(i)}_t$ evolve independently on their rings. The reveal variable $B_t$ sets whether the token $Y_t$ shows the latents, and in the context sweep the key $W_{t-1}$ sets whether the shown state is flipped. \textbf{(b)} The belief $m_t$ is the previous belief $m_{t-1}$ propagated by $\alpha$, or, when $B_t=1$, the revealed state from $Y_t$ propagated by $\alpha$.}
    \label{fig:task schematic}
\end{figure}

\subsection{Keyed Filters}
Keyed states are observed differently, but the key state $W_{t-1}$ is directly observable at every token and is invertible. This means that the Fourier mode of the evidence is directly related to the state:
$\phi_j(Y_t) = W_{t-1}\phi_j(X_t)$. The Bayes filter can be similarly extended, as
\begin{equation}
    m^{(j)}_{t} = \alpha_j \big[(1-B_t)m^{(j)}_{t-1} + B_t \mathbf{1}(W_{t-1} = 1)\phi_j(Y_t) + B_t \mathbf{1}(W_{t-1} = -1)W_{t-1}\phi_j(Y_t)\big]
\end{equation}

The first of the latter two terms accounts for the setting in which the key does not change the state, and the second accounts for the changed state by inverting the rotation to recover the state. This forces the predictive model to change where it reads in token information and writes to the predicted next token, thereby modulating the contextual coherence of the dynamics.

\subsection{The Jacobian Lens}\label{sec: jlens_factors}

In this section we derive the three factors of the J-lens for the Bayes filter (magnitude $a^{(j)}$, lag coherence $\rho_\tau^{(j)}$ and context coherence $\rho_c^{(j)}$) and show how the task parameters set them. We focus on one latent and one Fourier mode $j$: the Jacobian of mode $j$ of the belief at $t+\tau$ with respect to mode $j$ of the evidence at the revealed source $t$, in context $c$,
$$K_j(c,\tau)=\frac{\partial m^{(j)}_{t+\tau}}{\partial\phi_j(X_t)}.$$

This object captures all of the dynamics, as the latents are independent and the filter acts on each Fourier mode separately. Thus the Jacobians between different latents or different modes are zero, and the lens is block diagonal, making each mode its own separate singular vector.
The piecewise-linear components of the dynamics combine to form
\begin{equation}
    K_j(c,\tau) = \alpha_j^{\tau+1} \prod_{k=1}^\tau (1 -B_{t+k}) = \alpha_j^{\tau+1} A_\tau,
\end{equation}
where the product is one if no evidence has been observed across all tokens from position $t$ to $t+\tau$, and zero otherwise.

Let $r_j:=\abs{\alpha_j}(1-\varepsilon)$ be the memory of mode $j$ across one step, where $\varepsilon=P(B_t=1)$, and let $N(c)\geq1$ be the number of tokens until the next reveal, so that $P(N=k)=(1-\varepsilon)^{k-1}\varepsilon$. Summing over lags gives $\sum_{\tau=1}^{H}K_j(c,\tau)=\alpha_j S_{j,N(c)}$, where $S_{j,N}:=\sum_{\tau=1}^{\min(N-1,H)}\alpha_j^\tau$ is the response accumulated before the next reveal. This sum matches the lens of Eq.~\eqref{eq:jlens} up to the averaging factor $1/H$, which scales every singular value equally and so leaves their order unchanged. For large $H$ the three coherence parameters are (Appendix~\ref{app: bayes-factors})
\begin{equation}
    \begin{aligned}
        a^{(j)} &= \frac{|\alpha_j|r_j}{1- r_j}, \\
        \rho_\tau^{(j)} &= \frac{(1-r_j)\,\E_c\abs{S_{j,N}}}{r_j},\\
        \rho_c^{(j)}&=|\psi|\frac{r_j}{\abs{1-r_je^{\mathrm{i}\theta_j}}\,\E_c\abs{S_{j,N}}},
    \end{aligned}
    \label{eq:bayes-factors}
\end{equation}
where $\E_c\abs{S_{j,N}}$ is the series given in Eq.~\eqref{eq:ES_abs_inf}. 
When $\theta_j=0$, every term of $S_{j,N}$ is real and positive, so $\rho_\tau^{(j)}=1, \rho_c^{(j)}=|\psi|$ and the persistence $\lambda$ changes only the magnitude $a^{(j)}$. A nonzero angular velocity $\theta$ rotates successive terms of $S_{j,N}$ so that they partly cancel, which lowers the lag coherence $\rho_\tau^{(j)}$. 

\subsection{Connection to the Transformer}

The Jacobian lens on the transformer is defined by $\bm W_U\bm J_H$, where $\bm W_U$ is the unembedding matrix. Given that the activity after the unembedding layer maps to tokens via a softmax, the J-lens on the transformer studies the effect of perturbations on the log probabilities, rather than the probabilities of each state as in the Bayes filter. The difference on the Jacobian is a factor of $\partial \log p(X_t) = \partial p(X_t) / p(X_t)$, which is an effective reweighting by a positive factor. This reweighting, however, preserves the order of each factor ($a$, $\rho_\tau$, $\rho_c$), thus the qualitative ordering as predicted by the theory should transfer to the transformer. 

\section{Numerical Results}

In the following three sections, we first introduce the parameter that is modulated across multiple independent latent states in the HMM (three, across all sections) to induce selective changes in either magnitude $a$, lag coherence $\rho_\tau$, or context coherence $\rho_c$. We then apply the Bayes filter to validate the analytical theory. Next, we train a two-layer decoder-only transformer \citep{vaswani2017attention} with residual dimension 128 and 4 heads per layer to predict the next token given the history (see Appendix \ref{app: transformer} for further details). Our primary result is a validation that the Jacobian lens \textit{unsupervisedly} identifies independent modes associated with the latent states, ordered by the selectively modulated coherence factor. To do so,
we fit an encoding model that maps from the Fourier modes (two per latent state for $\sin$ and $\cos$) of a latent to the residual stream, and compute the coherence parameters for these directions. We also 
compute the total cosine similarity between this encoding subspace and the Jacobian lens singular vectors $\bm v_i$ (input side) and $\bm u_i$ (output side) (Appendices~\ref{methods: computing alignment} and~\ref{app: alignment}). We also repeat this process on decoding vectors. Finally, we show that we can predict the persistence $\lambda_0^*$ at which two latents swap places, given different angular velocities $\theta$. 

\subsection{Selectively modulating magnitude}

\begin{figure}[t]
    \centering
    \includegraphics[width=0.9\linewidth]{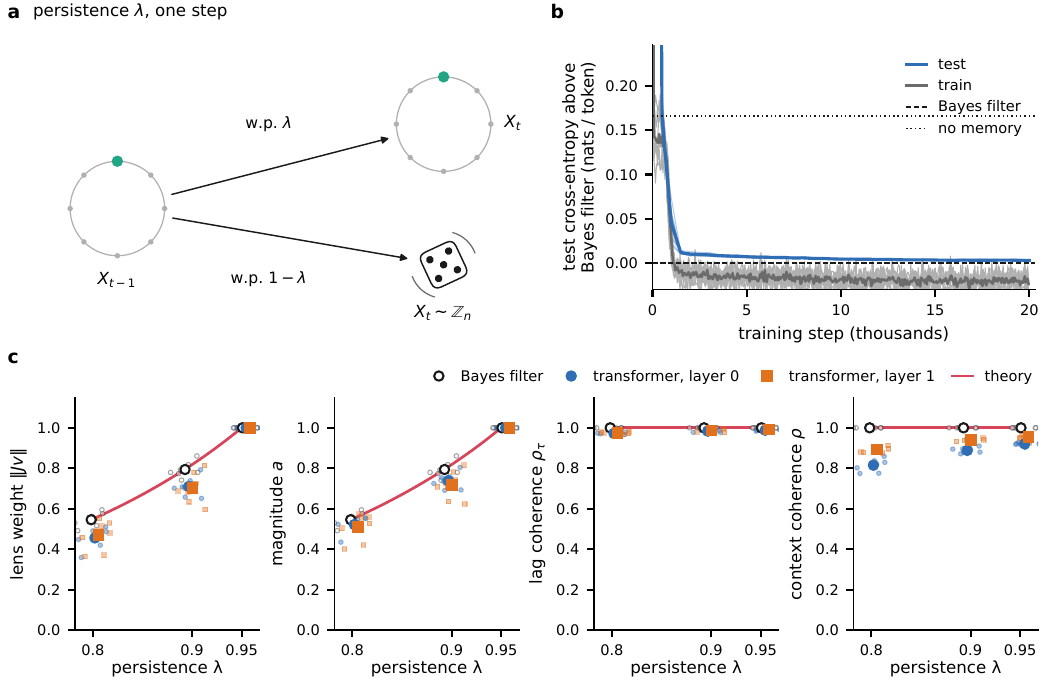}
    \caption{\textbf{The J-lens captures the selective modulation of magnitude $a$ in the Bayes filter and transformers.} \textbf{(a)} Cartoon of the task feature that is modulated: a state $X_{t-1}$ on the ring is kept with probability $\lambda$ (persistence), and sampled uniformly otherwise. \textbf{(b)} Learning curves for the transformer, in comparison to the optimal Bayes filter. 
    \textbf{(c)} J-lens factors of each latent as a function of persistence $\lambda$: lens weight (the singular value $a\,\rho_\tau\,\rho_c$) and magnitude $a$, both relative to their largest value, then lag coherence $\rho_\tau$ and context coherence $\rho_c$. Open circles: Bayes filter. Blue circles and orange squares: transformer layers 0 and 1. Large markers show the mean across seeds and small markers show single seeds. Red line: theory (Eq.~\ref{eq:bayes-factors}). 
    }
\label{fig: magnitude}
\end{figure}

We first modulate the persistence $\lambda$ by designing a hidden Markov model (HMM) with three latent states, each on a ring, with $\lambda = 0.8, 0.9, 0.95$ respectively. We set the latent state velocity $v = 0$, which holds $\rho_\tau$ and $\rho_c$ fixed while selectively modulating the magnitude $a$ of each latent (Fig. \ref{fig: magnitude}a). All models learn to predict the next token as well as the optimal Bayes filter (Fig. \ref{fig: magnitude}b). On the Bayes filter, the measured J-lens factors match the theory: the lens weight $||\mathbf{J}_H\mathbf{x}||_2$ tracks the magnitude $a$, while $\rho_\tau$ and $\rho_c$ stay fixed (Fig. \ref{fig: magnitude}c). The J-lens applied to the transformer has similar factors at each layer. As predicted, the J-lens directions identify latents in order of their magnitude $a$ (Fig. \ref{fig: lens_magnitude}).
Across all seeds, this is true for both the input direction $\bm v_i$ (the right singular vector of the J-lens, which reads in the residual stream at layer $\ell$) and the output direction $\bm u_i$ (the left singular vector, which writes to the output). The rank order predicted by the theory is validated on the Bayes filter and transformer (Figs.~\ref{fig: filter_validation} and~\ref{fig: transformer_validation}). The results are also consistent when utilizing decoding vectors for each latent, rather than encoding vectors (Fig. \ref{fig: decoding_magnitude}), albeit with slightly smaller alignment values.

\subsection{Selectively modulating lag coherence}

\begin{figure}[t]
    \centering
    \includegraphics[width=0.9\linewidth]{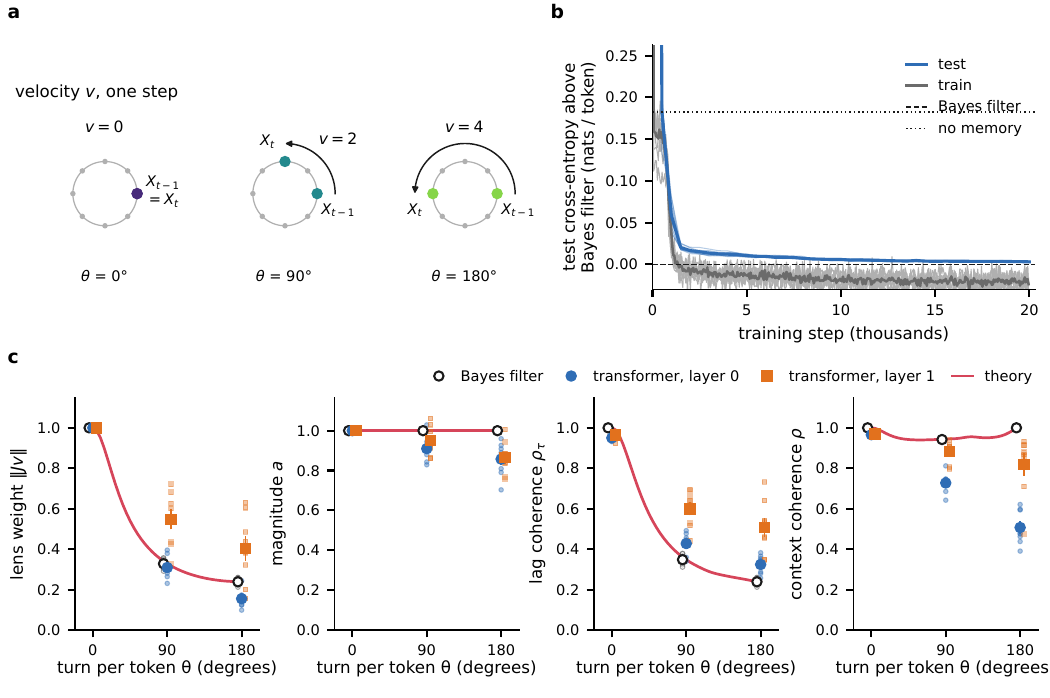}
    \caption{\textbf{The J-lens captures the selective modulation of lag coherence $\rho_\tau$ in the Bayes filter and transformers.}
    \textbf{(a)} The velocity $v$ of the state along the ring, with angular velocity $\theta = 2\pi v/n$, modulates lag coherence $\rho_\tau$.
    \textbf{(b)} Learning curves of the transformer, as in Fig. \ref{fig: magnitude}.
    \textbf{(c)} J-lens factors as a function of angular velocity $\theta$, as in Fig. \ref{fig: magnitude}.}
    \label{fig: lag}
\end{figure}

Next, we selectively modulated the lag coherence $\rho_\tau$ by changing the velocity $v$ of each latent on the ring (Fig. \ref{fig: lag}a). We placed states on a ring of 8 indices, and selected three latents again with velocities $v=0$ (perfectly coherent), $2$, and $4$ (maximally incoherent), that is, $\theta = 0, \pi/2, \pi$. Our theory again predicts that the J-lens should identify these latents in order of their lag coherence $\rho_\tau$. As in the prior section, we found that for transformers trained to optimality (Fig. \ref{fig: lag}b), this played out empirically across all seeds (Fig.~\ref{fig: lens_lag}). The lag coherence $\rho_\tau$ of the J-lens applied to the transformer on each latent separately was similarly modulated most by the velocity, while $a$ and $\rho_c$ were less sensitive (Fig. \ref{fig: lag}c). Again, the results are also consistent when utilizing decoding vectors for each latent, rather than encoding vectors (Fig. \ref{fig: decoding_lag}).

\subsubsection{$\omega$-lens family} As described in Sec. \ref{subsec: omega lens}, we also evaluated the $\omega$-J-lenses on the lag coherence sweep for both transformer layers and the Bayes filter. We swept the $\omega$ parameter across the whole range, and found that it preferentially upweighted latent directions with lag coherence parameters (tuned via $\theta$ in the task) that maximally aligned with the $\omega$ (Fig. \ref{fig: omega-lenses}). This suggests that the $\omega$-J-lens family can be used to selectively identify rapidly changing information across tokens (like syntax) or slowly changing information, by sweeping $\omega$.

\begin{figure}
    \centering
    \includegraphics[width=\linewidth]{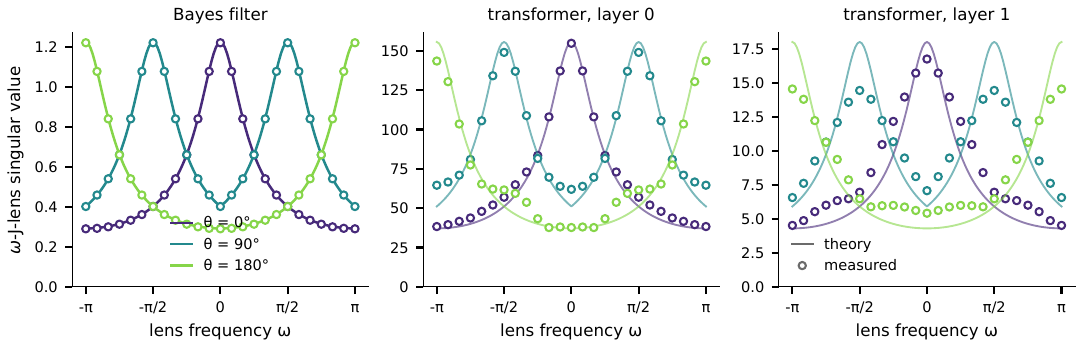}
    \caption{\textbf{The $\omega$-J-lens selectively upweights latent directions with lag coherences aligned with $\omega$.} Solid line indicates the theoretical curves (optimally rescaled from the Bayes filter theory to fit the transformers), points indicate real sampled data. Colors (denoting latent index) are determined by alignment with latent encoding direction.}
    \label{fig: omega-lenses}
\end{figure}

\subsection{Selectively modulating context coherence}

Lastly, we selectively modulate context coherence $\rho_c$ by introducing a key state to latent states with $n=2$ states (Sec. \ref{sec: keystate}). In this HMM, an observed state is flipped to the opposite state with probability $(1-\psi)/2$. Holding all other parameters besides $\psi$ fixed, the context coherence can be modulated independently, since $\rho_c=\psi$ for the Bayes filter (Appendix~\ref{app: transformer}). We select $\psi_i = 0,0.75,1$, ranging from maximally incoherent (0) to coherent (1), and train a transformer on this 3-latent HMM. The J-lens factors of the Bayes filter match the theory, with $\rho_c$ tracking $\psi$ (Fig. \ref{fig: context}). The Bayes filter J-lens identifies the latents in the predicted order, and the transformer does as well (Fig. \ref{fig: lens_context}). 

This lens is noisier than the previous analyses, potentially due to the increase in noise in the observed states added by the key variable. In both layers, several J-lens directions also align with patterns other than the latent belief states, mainly the reveal state $B_t$ (whether or not the latent states are observed) of the present token and the key state $W_{t-1}$ (Fig. \ref{fig: other_patterns_context}). This suggests that the J-lens can identify other important features in the latent space in an unsupervised fashion. The lens factors (Fig. \ref{fig: context}c) show that both transformer layers compress the range of context coherence: $\rho_c$ is much higher than in the Bayes filter for the incoherent latent ($\psi=0$) and lower for the coherent ones. The high $\rho_c$ at $\psi=0$ suggests that transformers may have a bias for extracting context-independent information from data. Finally, the results remain consistent when utilizing decoding vectors for each latent, rather than encoding vectors (Fig. \ref{fig: decoding_context}).

\begin{figure}[t]
    \centering
    \includegraphics[width=0.9\linewidth]{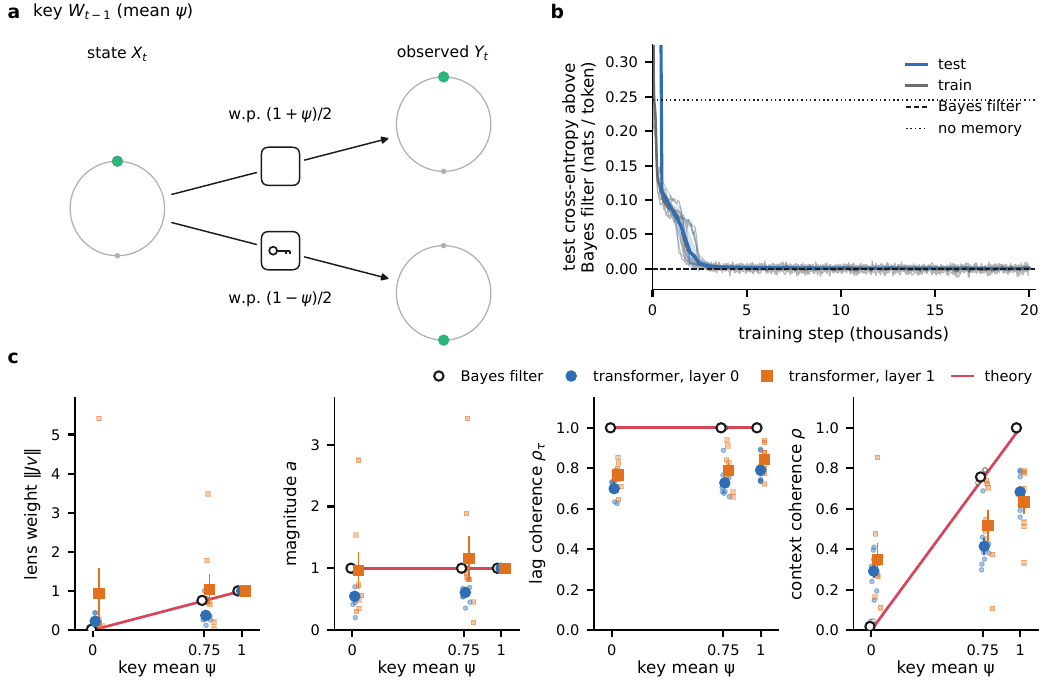}
    \caption{\textbf{The J-lens captures the selective modulation of context coherence $\rho_c$ in the Bayes filter and transformers.}
    \textbf{(a)} For keyed features, the true state is revealed in the token with probability $(1+\psi)/2$. Otherwise, the state is revealed inverted, so that $\rho_c=\psi$. 
    \textbf{(b)} Learning curves of the transformer, as in Fig. \ref{fig: magnitude}.
    \textbf{(c)} J-lens factors as a function of key bias $\psi$, as in Fig. \ref{fig: magnitude}.
    }
    \label{fig: context}
\end{figure}

\begin{figure}[hbpt]
    \centering
    \includegraphics[width=0.9\linewidth]{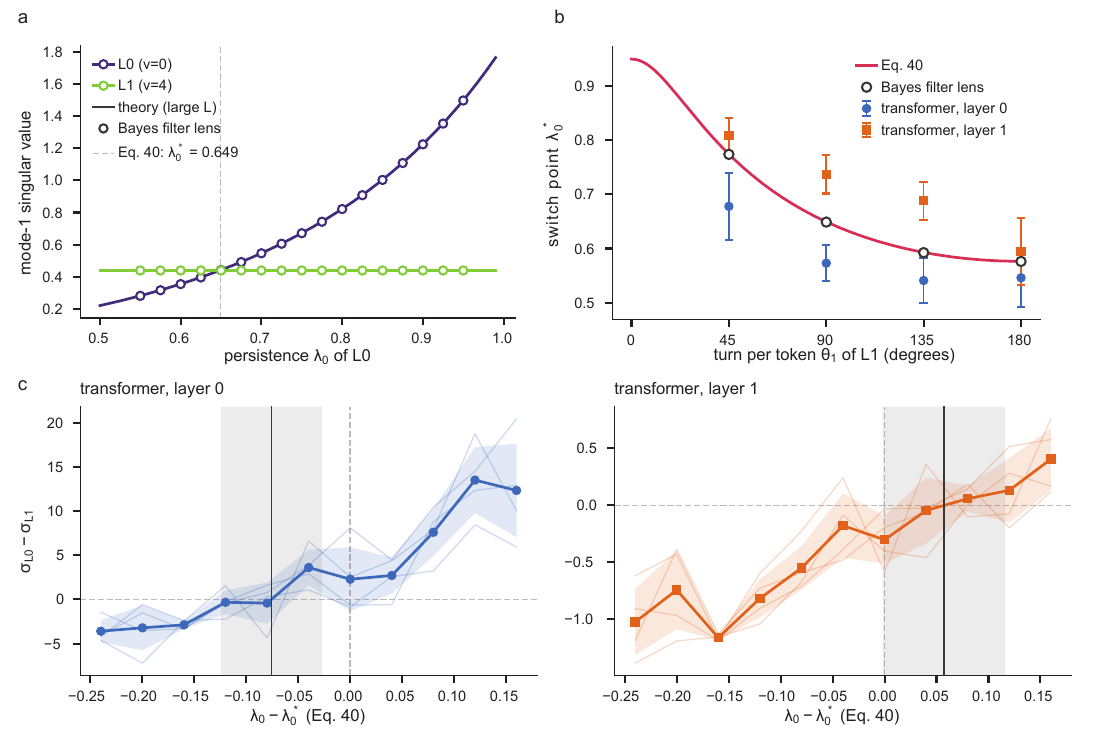}
    \caption{\textbf{Predictive theory of crossing of latents in the J-lens.}
    \textbf{(a)} Empirical singular values $\sigma$ (circles) and analytically derived values in the large-$H$ limit (line) for the J-lens applied to a Bayes filter with two latents (L0, L1). $\lambda_0$ is swept for L0, while $\lambda_1$ is held fixed for L1. Vertical dotted line denotes the predicted crossing point $\lambda_0^*$.
    \textbf{(b)} Predicted crossing point $\lambda_0^*$ (Eq.~\eqref{eq:lambda_star}) and empirical crossing point (circles) as a function of the angular velocity $\theta_1$ of L1.
    \textbf{(c)} Difference in singular values $\sigma_0-\sigma_1$ as a function of $\lambda_0-\lambda_0^*$, for the transformer trained on the same HMM as in (a). Vertical solid line denotes the mean crossing point across seeds, gray shading denotes standard deviation. Each curve is averaged over $4$ variations of $\theta_1$ (circles in (b)). 
    }
    \label{fig: crossing}
\end{figure}

\subsection{Theory predicts the crossing of two latents}

To further emphasize that our theory can predict the ordering of features in the J-lens, we show that our theory can predict when two latents are swapped in the Bayes filter, and that this predicted crossing point roughly translates to the transformer. We simulate an HMM with no keys here for simplicity. For two latents with angular velocities $\theta_0 = 0$ and $\theta_1$, our theory predicts a persistence value $\lambda_0^*$ for a given $\lambda_1$ and $\theta_1$, at which two latents swap order in the J-lens (Appendix~\ref{app: long-window}, Eq.~\eqref{eq:lambda_star}). On the Bayes filter, this theory plays out extremely cleanly (Fig. \ref{fig: crossing}a,b) across a range of $\lambda_1$ and $\theta_1$ values, with the theory exactly predicting the crossing point $\lambda_0^*$ of two latents. Applied to the transformer, our results generalize with small bias (for $\lambda \in [0,1]$, as the difference in singular values $\sigma_{0}-\sigma_{1}$ grows monotonically with $\lambda_0$, and the error is $O(0.01)$) and in opposite directions at each layer (Fig. \ref{fig: crossing}c). Together, this suggests that even though the Bayes filter theory does not perfectly translate to the transformer, as expected, the predictions are still quite informative.

\section{Discussion}

We have shown that the Jacobian lens, as introduced by \citet{gurnee2026verbalizable}, identifies modes in any temporally and spatially varying dynamical system that jointly maximize three different statistical features:

\paragraph{Magnitude $a$.} Magnitude measures how a state grows or shrinks across a `temporal' dimension. In transformers, this growth is across layers, with depth exponent $\chi$. However, this could be defined differently in other systems. Magnitude aligns most directly with the standard notion of a Lyapunov exponent, which typically measures growth across time in a dynamical system.

 \paragraph{Lag coherence $\rho_\tau$.} Lag coherence measures how a feature is modulated across `spatial' lags $\tau$. In transformers, this is measured across token positions, where a mode decays at rate $\mu$ and rotates at frequency $\theta$. Cross-token dynamics in language models have previously been studied as multi-particle systems \citep{lu2019understanding,geshkovski2023mathematical}, in which each token (particle) has a particular spatial position and interacts with others via self-attention. The most coherent spatial modes are those with zero spatial frequency and little decay ($\theta,\mu\approx0$). In Lyapunov theory, coherent directions relate to the measurement of hydrodynamic Lyapunov modes, which are exactly the spatially coherent Lyapunov vectors. 

 \paragraph{Context coherence $\rho_c$.} Context coherence measures how much the dynamics are modulated across different states or time periods. Roughly, maximizing context coherence identifies modes that are \textit{autonomous} ($\rho_c\approx1$), which is to say \textit{independent with respect to the context}. The transformer architecture is highly context-dependent, whereas autonomous dynamical systems are not. That the transformer learned features with much higher $\rho_c$ than the Bayes filter for the context-incoherent latent (Fig.~\ref{fig: context}c) suggests that transformers might learn to modulate contextual information via context-independent dynamics, which could be a fundamental mechanism for symbolic processing. This further suggests that it may be viable to study transformers more generally as dynamical systems \citep{ostrow2024delay} just like recurrent neural networks \citep{sussillo2013opening}. 

Our analysis explains a number of phenomena in the original J-lens paper \citep{gurnee2026verbalizable}. As described previously, high-magnitude, lag-coherent, and context-coherent directions do not have to be large variance (Corollary~\ref{cor:variance}). This is commensurate with \citet{gurnee2026verbalizable} finding that the J-space is never more than 10\% of activation variance. Lag coherence preference explains why, when Claude was asked to `concentrate on X while solving Y' \citep{gurnee2026verbalizable}, the J-lens identified features corresponding to X, as these features would be prevalent and coherent across all token positions. A preference for contextually coherent directions explains why averaging identifies concepts that are broadly verbalizable, not specific to a single prompt. 

Our work aligns with other results in mechanistic interpretability \citep{shai2024transformers} that study how toy transformers represent belief state geometry in their latent state and learn to decompose independent factors of the generative process into orthogonal subspaces \citep{shai2026transformers}. However, this work has generally relied on linear decoding of these belief states, which requires \textit{a priori} knowledge of the optimal belief states at each timepoint. Our results here suggest that the J-lens could be a way to identify these belief states in an unsupervised manner, which would be effective for scaling to real-world behaviors. 

This analysis also suggests that a family of Jacobian lenses could be designed to capture different types of information within transformers. For example, local information could be emphasized by averaging over high-frequency Fourier modes, rather than low-frequency, as we showed in Fig. \ref{fig: omega-lenses}. This could be relevant for isolating and understanding fast-changing information, such as those on the level of syntax that change word-by-word. Contextually incoherent information could be isolated by identifying Jacobian modes that minimize context coherence $\rho_c$, rather than maximize it. Given recent work on estimating Jacobians from dynamical data \citep{eisen2026characterizing}, the $\omega$-J-lenses could productively be applied to biological neural data as well. Together, our results clarify the mathematical nature of the J-lens, which is necessary for such a useful and efficient interpretability tool. 
 \paragraph{Limitations.} Here, we only study toy models, in which the task was relatively easy to learn and the coherence parameters of the task were easily manipulable. 

\FloatBarrier 

\bibliography{bib}

@article{gurnee2026verbalizable,
  title   = {Verbalizable Representations Form a Global Workspace in Language Models},
  author  = {Gurnee, Wes and Sofroniew, Nicholas and Pearce, Adam and Piotrowski, Mateusz and Kauvar, Isaac and Chen, Runjin and Soligo, Anna and Bogdan, Paul and Ong, Euan and Wang, Rowan and Thompson, Ben and Abrahams, David and Kantamneni, Subhash and Ameisen, Emmanuel and Batson, Joshua and Lindsey, Jack},
  journal = {arXiv preprint arXiv:2607.15495},
  year    = {2026}
}

@inproceedings{
shai2026transformers,
title={Transformers learn factored representations},
author={Adam Shai and Loren Amdahl-Culleton and Casper L. Christensen and Henry R Bigelow and Fernando Rosas and Alexander Boyd and Eric A. Alt and Kyle J Ray and Paul M. Riechers},
booktitle={Forty-third International Conference on Machine Learning},
year={2026},
url={https://openreview.net/forum?id=flDGB9On0h}
}

@article{shai2024transformers,
  title={Transformers represent belief state geometry in their residual stream},
  author={Shai, Adam S and Marzen, Sarah E and Teixeira, Lucas and Oldenziel, Alexander G and Riechers, Paul M},
  journal={Advances in Neural Information Processing Systems},
  volume={37},
  pages={75012--75034},
  year={2024}
}

@book{baars1988cognitive,
  title     = {A Cognitive Theory of Consciousness},
  author    = {Baars, Bernard J.},
  publisher = {Cambridge University Press},
  year      = {1988}
}

@article{dehaene1998neuronal,
  title   = {A neuronal model of a global workspace in effortful cognitive tasks},
  author  = {Dehaene, Stanislas and Kerszberg, Michel and Changeux, Jean-Pierre},
  journal = {Proceedings of the National Academy of Sciences},
  volume  = {95},
  number  = {24},
  pages   = {14529--14534},
  year    = {1998}
}

@article{oseledets1968multiplicative,
  title   = {A multiplicative ergodic theorem. {L}yapunov characteristic numbers for dynamical systems},
  author  = {Oseledets, Valery I.},
  journal = {Transactions of the Moscow Mathematical Society},
  volume  = {19},
  pages   = {197--231},
  year    = {1968}
}

@article{eckmann1985ergodic,
  title   = {Ergodic theory of chaos and strange attractors},
  author  = {Eckmann, Jean-Pierre and Ruelle, David},
  journal = {Reviews of Modern Physics},
  volume  = {57},
  number  = {3},
  pages   = {617--656},
  year    = {1985}
}

@article{deissler1987velocity,
  title   = {Velocity-dependent {L}yapunov exponents as a measure of chaos for open-flow systems},
  author  = {Deissler, Robert J. and Kaneko, Kunihiko},
  journal = {Physics Letters A},
  volume  = {119},
  number  = {8},
  pages   = {397--402},
  year    = {1987}
}

@article{benettin_lyapunov_1980,
	title = {Lyapunov {Characteristic} {Exponents} for smooth dynamical systems and for hamiltonian systems; a method for computing all of them. {Part} 1: {Theory}},
	volume = {15},
	issn = {1572-9648},
	url = {https://doi.org/10.1007/BF02128236},
	doi = {10.1007/BF02128236},
	number = {1},
	journal = {Meccanica},
	author = {Benettin, Giancarlo and Galgani, Luigi and Giorgilli, Antonio and Strelcyn, Jean-Marie},
	month = mar,
	year = {1980},
	pages = {9--20},
}

@article{eisen2026characterizing,
  title={Characterizing control between interacting subsystems with deep Jacobian estimation},
  author={Eisen, Adam J and Ostrow, Mitchell and Chandra, Sarthak and Kozachkov, Leo and Miller, Earl and Fiete, Ila},
  journal={Advances in Neural Information Processing Systems},
  volume={38},
  pages={51361--51406},
  year={2026}
}

@article{engelken2024,
  title = {Lyapunov spectra of chaotic recurrent neural networks},
  author = {Engelken, Rainer and Wolf, Fred and Abbott, L. F.},
  journal = {Phys. Rev. Res.},
  volume = {5},
  issue = {4},
  pages = {043044},
  numpages = {28},
  year = {2023},
  month = {Oct},
  publisher = {American Physical Society},
  doi = {10.1103/PhysRevResearch.5.043044},
  url = {https://link.aps.org/doi/10.1103/PhysRevResearch.5.043044}
}

@article{kenfack2013convective,
  title   = {Convective {L}yapunov spectra},
  author  = {Jiotsa, Kenfack and Jiotsa, Aurelien and Politi, Antonio and Torcini, Alessandro},
  journal = {Journal of Physics A: Mathematical and Theoretical},
  volume  = {46},
  number  = {25},
  pages   = {254013},
  year    = {2013},
  doi     = {10.1088/1751-8113/46/25/254013}
}

@article{eckmann2000hydrodynamic,
  title   = {Hydrodynamic {L}yapunov modes in translation-invariant systems},
  author  = {Eckmann, Jean-Pierre and Gat, Omri},
  journal = {Journal of Statistical Physics},
  volume  = {98},
  number  = {3},
  pages   = {775--798},
  year    = {2000}
}

@inproceedings{vaswani2017attention,
  title     = {Attention is all you need},
  author    = {Vaswani, Ashish and Shazeer, Noam and Parmar, Niki and Uszkoreit, Jakob and Jones, Llion and Gomez, Aidan N. and Kaiser, {\L}ukasz and Polosukhin, Illia},
  booktitle = {Advances in Neural Information Processing Systems},
  volume    = {30},
  year      = {2017}
}

@article{lu2019understanding,
  title   = {Understanding and improving {T}ransformer from a multi-particle dynamic system point of view},
  author  = {Lu, Yiping and Li, Zhuohan and He, Di and Sun, Zhiqing and Dong, Bin and Qin, Tao and Wang, Liwei and Liu, Tie-Yan},
  journal = {arXiv preprint arXiv:1906.02762},
  year    = {2019}
}

@article{geshkovski2023mathematical,
  title   = {A mathematical perspective on {T}ransformers},
  author  = {Geshkovski, Borjan and Letrouit, Cyril and Polyanskiy, Yury and Rigollet, Philippe},
  journal = {arXiv preprint arXiv:2312.10794},
  year    = {2023}
}

@article{sussillo2013opening,
  title   = {Opening the black box: low-dimensional dynamics in high-dimensional recurrent neural networks},
  author  = {Sussillo, David and Barak, Omri},
  journal = {Neural Computation},
  volume  = {25},
  number  = {3},
  pages   = {626--649},
  year    = {2013}
}

@article{ostrow2024delay,
  title   = {Delay embedding theory of neural sequence models},
  author  = {Ostrow, Mitchell and Eisen, Adam and Fiete, Ila},
  journal = {arXiv preprint arXiv:2406.11993},
  year    = {2024}
}

@misc{nostalgebraist2020logitlens,
  title        = {interpreting {GPT}: the logit lens},
  author       = {nostalgebraist},
  howpublished = {LessWrong, \url{https://www.lesswrong.com/posts/AcKRB8wDpdaN6v6ru/interpreting-gpt-the-logit-lens}},
  year         = {2020}
}

@article{belrose2023tuned,
  title   = {Eliciting latent predictions from transformers with the tuned lens},
  author  = {Belrose, Nora and Ostrovsky, Igor and McKinney, Lev and Furman, Zach and Smith, Logan and Halawi, Danny and Biderman, Stella and Steinhardt, Jacob},
  journal = {arXiv preprint arXiv:2303.08112},
  year    = {2023}
}

@inproceedings{wendler2024llamas,
  title     = {Do Llamas Work in {E}nglish? On the Latent Language of Multilingual Transformers},
  author    = {Wendler, Chris and Veselovsky, Veniamin and Monea, Giovanni and West, Robert},
  booktitle = {Proceedings of the 62nd Annual Meeting of the Association for Computational Linguistics (ACL)},
  year      = {2024}
}

@inproceedings{halawi2024overthinking,
  title     = {Overthinking the Truth: Understanding how Language Models Process False Demonstrations},
  author    = {Halawi, Danny and Denain, Jean-Stanislas and Steinhardt, Jacob},
  booktitle = {International Conference on Learning Representations},
  year      = {2024}
}

@inproceedings{pal2023future,
  title     = {Future Lens: Anticipating Subsequent Tokens from a Single Hidden State},
  author    = {Pal, Koyena and Sun, Jiuding and Yuan, Andrew and Wallace, Byron C. and Bau, David},
  booktitle = {Proceedings of the 27th Conference on Computational Natural Language Learning (CoNLL)},
  year      = {2023},
  doi       = {10.18653/v1/2023.conll-1.37}
}

@article{yan2026short,
  title   = {Short Horizons and Sparse Concepts: a Mathematical View of the Readout in the {J}-lens},
  author  = {Yan, Shi-Qi and Ding, Kai-Xuan and Tan, Chao-Hong and Chen, Qian and Wang, Wen and Li, Xiangang and Ling, Zhen-Hua},
  journal = {arXiv preprint arXiv:2608.25347},
  year    = {2026}
}

@article{stein1981estimation,
  title   = {Estimation of the mean of a multivariate normal distribution},
  author  = {Stein, Charles M.},
  journal = {The Annals of Statistics},
  volume  = {9},
  number  = {6},
  pages   = {1135--1151},
  year    = {1981}
}

@inproceedings{dettmers2022llmint8,
  title     = {{LLM.int8()}: 8-bit Matrix Multiplication for Transformers at Scale},
  author    = {Dettmers, Tim and Lewis, Mike and Belkada, Younes and Zettlemoyer, Luke},
  booktitle = {Advances in Neural Information Processing Systems},
  volume    = {35},
  year      = {2022}
}

@inproceedings{sun2024massive,
  title     = {Massive Activations in Large Language Models},
  author    = {Sun, Mingjie and Chen, Xinlei and Kolter, J. Zico and Liu, Zhuang},
  booktitle = {Conference on Language Modeling (COLM)},
  year      = {2024}
}

@article{su2024roformer,
  title   = {{RoFormer}: Enhanced transformer with rotary position embedding},
  author  = {Su, Jianlin and Ahmed, Murtadha and Lu, Yu and Pan, Shengfeng and Bo, Wen and Liu, Yunfeng},
  journal = {Neurocomputing},
  volume  = {568},
  pages   = {127063},
  year    = {2024}
}

@inproceedings{loshchilov2019decoupled,
  title     = {Decoupled Weight Decay Regularization},
  author    = {Loshchilov, Ilya and Hutter, Frank},
  booktitle = {International Conference on Learning Representations},
  year      = {2019}
}

@misc{wikimedia2023dump,
  author       = {{Wikimedia Foundation}},
  title        = {Wikimedia Downloads: Wikipedia dump of 2023-11-01},
  year         = {2023},
  howpublished = {\url{https://dumps.wikimedia.org}}
}

@book{tunstall2022nlp,
  title     = {Natural Language Processing with Transformers},
  author    = {Tunstall, Lewis and von Werra, Leandro and Wolf, Thomas},
  publisher = {O'Reilly Media},
  year      = {2022}
}

@inproceedings{lu2021codexglue,
  title     = {{CodeXGLUE}: A Machine Learning Benchmark Dataset for Code Understanding and Generation},
  author    = {Lu, Shuai and Guo, Daya and Ren, Shuo and Huang, Junjie and Svyatkovskiy, Alexey and Blanco, Ambrosio and Clement, Colin and Drain, Dawn and Jiang, Daxin and Tang, Duyu and Li, Ge and Zhou, Lidong and Shou, Linjun and Zhou, Long and Tufano, Michele and Gong, Ming and Zhou, Ming and Duan, Nan and Sundaresan, Neel and Deng, Shao Kun and Fu, Shengyu and Liu, Shujie},
  booktitle = {Proceedings of the Neural Information Processing Systems Track on Datasets and Benchmarks},
  year      = {2021}
}

@misc{meyer2024synthetic,
  title        = {Synthetic-Text-To-{SQL}: A synthetic dataset for training language models to generate {SQL} queries from natural language prompts},
  author       = {Meyer, Yev and Emadi, Marjan and Nathawani, Dhruv and Ramaswamy, Lipika and Boyd, Kendrick and Van Segbroeck, Maarten and Grossman, Matthew and Mlocek, Piotr and Newberry, Drew},
  year         = {2024},
  howpublished = {\url{https://huggingface.co/datasets/gretelai/synthetic_text_to_sql}}
}

@inproceedings{paster2024openwebmath,
  title     = {{OpenWebMath}: An Open Dataset of High-Quality Mathematical Web Text},
  author    = {Paster, Keiran and Dos Santos, Marco and Azerbayev, Zhangir and Ba, Jimmy},
  booktitle = {International Conference on Learning Representations},
  year      = {2024}
}

@article{cobbe2021training,
  title   = {Training Verifiers to Solve Math Word Problems},
  author  = {Cobbe, Karl and Kosaraju, Vineet and Bavarian, Mohammad and Chen, Mark and Jun, Heewoo and Kaiser, Lukasz and Plappert, Matthias and Tworek, Jerry and Hilton, Jacob and Nakano, Reiichiro and Hesse, Christopher and Schulman, John},
  journal = {arXiv preprint arXiv:2110.14168},
  year    = {2021}
}

@inproceedings{chen2021dialogsum,
  title     = {{DialogSum}: A Real-Life Scenario Dialogue Summarization Dataset},
  author    = {Chen, Yulong and Liu, Yang and Chen, Liang and Zhang, Yue},
  booktitle = {Findings of the Association for Computational Linguistics: ACL-IJCNLP 2021},
  pages     = {5062--5074},
  year      = {2021}
}

@inproceedings{hermann2015teaching,
  title     = {Teaching Machines to Read and Comprehend},
  author    = {Hermann, Karl Moritz and Ko{\v{c}}isk{\'y}, Tom{\'a}{\v{s}} and Grefenstette, Edward and Espeholt, Lasse and Kay, Will and Suleyman, Mustafa and Blunsom, Phil},
  booktitle = {Advances in Neural Information Processing Systems},
  volume    = {28},
  year      = {2015}
}

@inproceedings{see2017get,
  title     = {Get To The Point: Summarization with Pointer-Generator Networks},
  author    = {See, Abigail and Liu, Peter J. and Manning, Christopher D.},
  booktitle = {Proceedings of the 55th Annual Meeting of the Association for Computational Linguistics},
  pages     = {1073--1083},
  year      = {2017}
}

@inproceedings{tuggener2020ledgar,
  title     = {{LEDGAR}: A Large-Scale Multi-label Corpus for Text Classification of Legal Provisions in Contracts},
  author    = {Tuggener, Don and von D{\"a}niken, Pius and Peetz, Thomas and Cieliebak, Mark},
  booktitle = {Proceedings of the Twelfth Language Resources and Evaluation Conference},
  pages     = {1235--1241},
  year      = {2020}
}

@inproceedings{chalkidis2022lexglue,
  title     = {{LexGLUE}: A Benchmark Dataset for Legal Language Understanding in {E}nglish},
  author    = {Chalkidis, Ilias and Jana, Abhik and Hartung, Dirk and Bommarito, Michael and Androutsopoulos, Ion and Katz, Daniel Martin and Aletras, Nikolaos},
  booktitle = {Proceedings of the 60th Annual Meeting of the Association for Computational Linguistics},
  pages     = {4310--4330},
  year      = {2022}
}
\bibliographystyle{iclr2027_conference}

\appendix

\section{Jacobian singular values under averaging}\label{app: avg_jac}
 
Fix $U,V$ and singular values $s_j = e^{-\beta j}$ for $j=1,\dots,d$, with
$S=\mathrm{diag}(s_1,\dots,s_d)$. For each sample $i=1,\dots,N$, draw a random
permutation $P_i$ and set
\begin{equation}
  X_i \;=\; U P_i S P_i^\top V^\top .
\end{equation}
Every $X_i$ has the same singular values $e^{-\beta}, e^{-2\beta},\dots,e^{-\beta d}$.
Since $U,V$ are shared,
\begin{equation}
  \bar X \;=\; \frac1N\sum_i X_i \;=\; U\Big(\frac1N\sum_i P_i S P_i^\top\Big)V^\top
  \;=\; U\bar S V^\top,
\end{equation}
so the singular values of $\bar X$ are the diagonal entries of $\bar S$,
\begin{equation}
  \bar S_{kk} \;=\; \frac1N\sum_i s_{\pi_i(k)} .
\end{equation}
Each $s_{\pi_i(k)}$ is a uniform draw from $\{s_1,\dots,s_d\}$, so
\begin{equation}
  \E\,\bar S_{kk} \;=\; \bar s \;=\; \frac1d\sum_{j=1}^{d} e^{-\beta j},
  \qquad
  \mathrm{Var}\,\bar S_{kk} \;=\; \frac{\mathrm{Var}(s)}{N}\;\xrightarrow{N\to\infty}\;0 .
\end{equation}
Every singular value of $\bar X$ therefore converges to the same value $\bar s$.
Each $X_i$ has the decaying spectrum $e^{-\beta k}$, but the mean $\bar X$ has a
flat spectrum: $\bar X \to \bar s\, U V^\top$.

\section{Real Jacobians of LLMs}\label{app: real_jacobians}

For each model, we computed per-token Jacobians of the residual stream on 96 text prompts drawn
from 16 public datasets, six prompts each, all from Hugging Face:
\begin{itemize}
  \item Wikipedia, 2023-11-01 dump, in English, Russian, Arabic, Chinese, Japanese and Swahili
        (\texttt{wikimedia/wikipedia}; \citealp{wikimedia2023dump});
  \item Python code from CodeParrot-clean (\texttt{codeparrot/codeparrot-clean}; \citealp{tunstall2022nlp});
  \item JavaScript functions from the CodeXGLUE code-to-text benchmark
        (\texttt{google/code\_x\_glue\_ct\_code\_to\_text}; \citealp{lu2021codexglue});
  \item SQL schemas with queries from Gretel's synthetic text-to-SQL dataset
        (\texttt{gretelai/synthetic\_text\_to\_sql}; \citealp{meyer2024synthetic});
  \item mathematical web text from OpenWebMath (\texttt{open-web-math/open-web-math}; \citealp{paster2024openwebmath});
  \item grade-school math word problems with worked solutions from GSM8K
        (\texttt{openai/gsm8k}; \citealp{cobbe2021training});
  \item dialogues from DialogSum (\texttt{knkarthick/dialogsum}; \citealp{chen2021dialogsum});
  \item news articles from CNN/DailyMail, version 3.0.0
        (\texttt{abisee/cnn\_dailymail}; \citealp{hermann2015teaching,see2017get});
  \item poems (\texttt{merve/poetry})\footnote{\url{https://huggingface.co/datasets/merve/poetry}};
  \item contract clauses from LEDGAR, via LexGLUE
(\texttt{coastalcph/lex\_glue}; \citealp{tuggener2020ledgar,chalkidis2022lexglue});
  \item recipes (\texttt{corbt/all-recipes})\footnote{\url{https://huggingface.co/datasets/corbt/all-recipes}}.
\end{itemize}
Prompts were drawn at random, with a fixed seed, from the first 200 training examples of each
dataset. Only texts of at least 400 characters were kept, and each was truncated to its first
57 tokens. We computed the exact Jacobian from the middle block of all models to the last block, one token ahead:

\begin{equation}
  J_{c,t} \;=\; \frac{\partial\, h_{L-1}^{(c)}(t+1)}{\partial\, h_{\ell}^{(c)}(t)} \;\in\; \mathbb{R}^{d \times d}.
\end{equation}
Jacobians were computed by reverse-mode differentiation, with float32 weights and eager attention,
giving $N = 96 \times 8 = 768$ matrices per model. The mean Jacobian is their element-wise average,
\begin{equation}
  \bar{J} \;=\; \frac{1}{N} \sum_{c,t} J_{c,t}.
\end{equation}
where $c$ denotes context and $t$ denotes position in the text.
To visualize the distribution of these matrices, we computed each Jacobian's top 100 singular values, suppressing the rest as they may be noisy. We rescaled the singular value distribution by taking a logarithm, so that lower values can contribute in dimensionality reduction, and normalized the distribution for all Jacobians. We then fit and applied PCA to the individual Jacobians, then projected the mean Jacobian into this space.  

\begin{figure}
    \centering
    \includegraphics[width=0.5\linewidth]{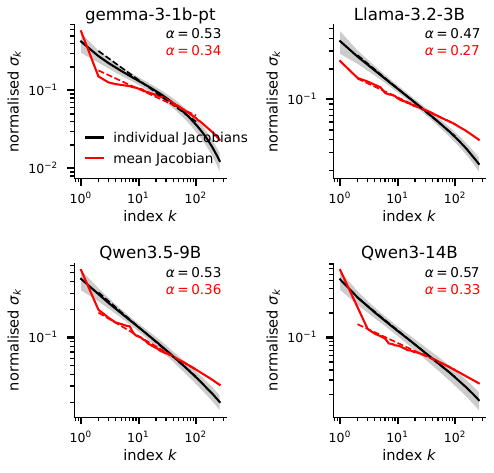}
    \caption{\textbf{Jacobian singular value spectra of four LLMs, with estimated power laws.} Individual Jacobian curve is averaged over all points in Fig.~\ref{fig: real_jac}, with shading denoting standard deviation.}
    \label{fig: jac_spectra}
\end{figure}

\begin{figure}
    \centering
    \includegraphics[width=0.8\linewidth]{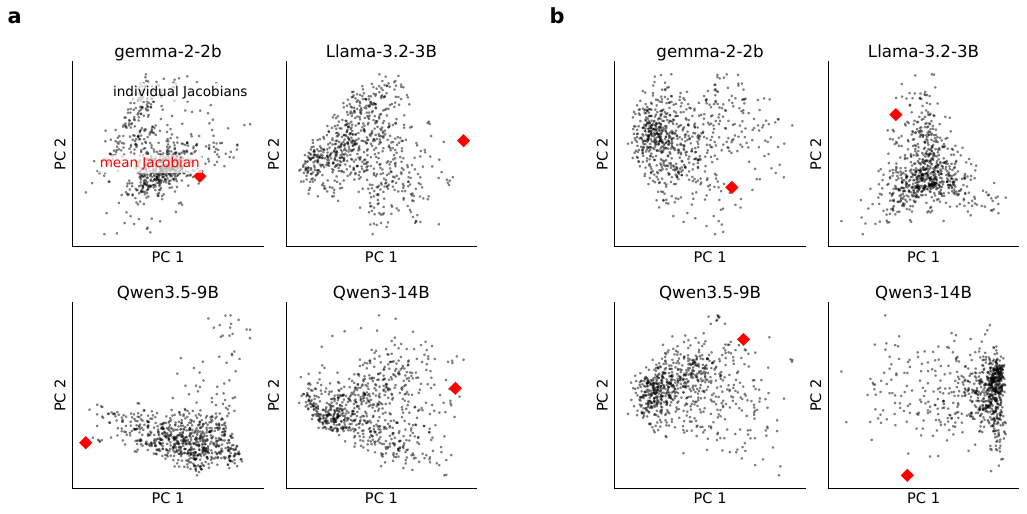}
    \caption{\textbf{Low-dimensional projection of the left and right singular vectors of Jacobians.} The top 100 left and right singular vectors, ranked by singular values as in Fig.~\ref{fig: real_jac}, are projected into two dimensions by computing the subspace distance on Grassmannians $||P_i - P_j||_F = \sqrt{2\sum_k \sin^2 \theta_k}$, where $\theta_k$ are the principal subspace angles and $P_i$ is the projector matrix defined by the singular vectors, $P_i = U_iU_i^T$. This pairwise distance matrix is reduced to two dimensions and visualized with PCA. (a) Left singular vectors. (b) Right singular vectors.}
    \label{appfig: real_jac_svecs}  
\end{figure}

\section{The shared-mode model}\label{app:mode-model}
This appendix provides the averaging steps and assumptions behind the shared-mode model of Eq.~\eqref{eq:model}.

\paragraph{Averaging.} The shared component has singular strengths
$e^{T\chi_i+\tau\mu_i}$.
Suppressing $\ell$ in the notation, averaging gives
\begin{equation}
 \begin{aligned}
 \bm J_H&=\bm S_H+\E_c\bm R_c,
 &\bm S_H&=\sum_i g_i \bm u_i \bm v_i^*,\\
 \bm R_c&=\frac1H\sum_{\tau=0}^{H-1}\bm R(c,\ell,\tau),
 &g_i&=e^{T\chi_i}f_H(\gamma_i),\\
 f_H(\gamma)&=\frac1H\sum_{\tau=0}^{H-1}e^{\tau\gamma}
 =\begin{cases}
 (1-e^{H\gamma})/(H(1-e^{\gamma})),&e^{\gamma}\ne1,\\
 1,&e^{\gamma}=1.
 \end{cases}
 \end{aligned}
 \label{eq:model-average}
\end{equation}
The shared singular strengths after averaging are $|g_i|$.
Assume bounded accumulated depth growth and non-growing lag dynamics:
\begin{equation}
 \begin{gathered}
 T\chi_i\leq\Gamma,\qquad \mu_i\leq0,\\
\norm{\bm S_H}_2 = M\leq e^\Gamma,\qquad
 \E_c\norm{\bm R_c}_2^2\leq M^2.
 \end{gathered}
 \label{eq:scale}
\end{equation}
These assumptions prevent the context-specific response from having
arbitrarily greater magnitude than the shared response scale.
Spectral dominance additionally requires shared strengths above the
averaged remainder, as shown in Eq.~\eqref{eq:suppression}.

\paragraph{Overlap of context-specific responses.} For a unit input perturbation $\bm v$ and independent context draws $c,c'$,
\begin{equation}
 \begin{aligned}
 \norm{(\E_c\bm R_c)\bm v}_2^2
 =\bm v^*\E_{c,c'}[\bm R_c^*\bm R_{c'}]\bm v
 =\E_{c,c'}\langle \bm R_c\bm v,\bm R_{c'}\bm v\rangle.
 \end{aligned}
 \label{eq:directional-growth}
\end{equation}
Here $\langle \bm A,\bm B\rangle_F=\operatorname{tr}(\bm A^*\bm B)$ and $*$ denotes
conjugate transpose. Thus growth is the expected Frobenius overlap of
the Jacobians restricted to the same input direction. The unprojected
overlap $\langle \bm R_c,\bm R_{c'}\rangle_F$ sums over input directions instead.

Write $\bm R_c=\sum_j s^R_{cj}\bm u^R_{cj}(\bm v^R_{cj})^*$, using $s,\bm u,\bm v$ for
singular values, output directions, and input directions. Then
\begin{equation}
 \langle \bm R_c\bm v,\bm R_{c'}\bm v\rangle
 =\sum_{j,k}
 \underbrace{s^R_{cj}s^R_{c'k}}_{\text{magnitude}}
 \underbrace{(\bm v^*\bm v^R_{cj})((\bm v^R_{c'k})^*\bm v)}_{\text{input alignment}}
 \underbrace{((\bm u^R_{cj})^*\bm u^R_{c'k})}_{\text{output alignment}}.
 \label{eq:residual-alignment}
\end{equation}
The bound $\sup_{\norm{\bm v}=1}\E_{c,c'}|\langle \bm R_c\bm v,\bm R_{c'}\bm v\rangle|\leq\eta\,\E_c\norm{\bm R_c}_2^2$ used in Eq.~\eqref{eq:suppression} therefore requires that, on average, the input or output directions of the context-specific modes are poorly aligned across contexts.

\paragraph{Response-scale bound.} Since $\norm{\bm J_H\bm v}=\rho_c(\bm v)\,\rho_\tau(\bm v)\,a(\bm v)$ and $a(\bm v)\leq\E_{c,\tau}\norm{\bm K(c,\ell,\tau)}_2$ for a unit input $\bm v$,
\begin{equation}
 \norm{\bm J_H\bm v}\leq\rho_c(\bm v)\,\rho_\tau(\bm v)\,\E_{c,\tau}\norm{\bm K(c,\ell,\tau)}_2 .
\end{equation}
At a fixed response scale, a direction's contribution to the lens is therefore capped by the product of its two coherences.

\section{Proof of Corollary~\ref{cor:variance}}\label{app:corollary-proof}

\begin{proof} 
Let \(\bm e_1,\bm e_2\) be orthonormal directions, and consider the linear response model
\[
\bm K(c,\ell,\tau)
=s\left(\bm e_1\bm e_1^*+\xi_c(-1)^\tau \bm e_2\bm e_2^*\right),
\qquad s>0,
\]
where \(\xi_c\) takes values \(\pm1\) with equal probability. The first direction is perfectly coherent across contexts and constant across future lags; the second changes sign across contexts and oscillates rapidly across lags. Thus
\[
\bm J_H=\E_c\frac1H\sum_{\tau=0}^{H-1}\bm K(c,\ell,\tau)
=s\,\bm e_1\bm e_1^*.
\]
The unique nonzero singular value is \(s\), and its input direction is \(\bm e_1\).

Independently, choose centered input activations, independent of \(\xi_c\), with covariance
\[
\bm C=\beta\, \bm e_1\bm e_1^*+\bm e_2\bm e_2^*,
\qquad \beta>0.
\]
These responses are realized by the context-dependent linear maps \(\bm h\mapsto \bm K(c,\ell,\tau)\bm h\). The fraction of activation variance captured by projection onto the selected direction is
\[
\frac{\bm e_1^*\bm C\bm e_1}{\operatorname{tr}\bm C}
=\frac{\beta}{1+\beta}
\longrightarrow0
\quad\text{as }\beta\to0.
\]
Interchanging the two covariance eigenvalues makes this fraction approach \(1\), without changing the Jacobians, their coherence or slowness, or the singular values of \(\bm J_H\).

By \eqref{eq:jlens-token}, every nonzero token-indexed J-lens direction in this example is proportional to \(\bm e_1\). Hence even the full span of the J-lens directions can capture arbitrarily little activation variance. Jacobian averaging favors coherent, slowly varying responses, weighted by their response magnitudes. Those properties do not require the corresponding activations to have large variance.
\end{proof}

\section{Bayes filter derivations} \label{app: bayes}
Any
distribution $p$ on $\mathbb{Z}_n$ is defined by its modes $\E_p\phi_j(X)$:
\begin{equation}
 p(x)=\frac1n\sum_{j=0}^{n-1}\E_p[\phi_j(X)]\,\conj{\phi_j(x)} .
 \label{eq:inverse}
\end{equation}

\begin{lemma}[Expected dynamic]\label{lem:step}
For $j\ne0$,
\begin{equation}
 \E[\phi_j(X_{t+1})\mid X_t]=\alpha_j\,\phi_j(X_t),\qquad
 \alpha_j:=\lambda\,c_j\,e^{\mathrm{i}\theta_j},\qquad
 c_j:=1-2q+2q\cos(2\pi j/n).
 \label{eq:cj}
\end{equation}
\end{lemma}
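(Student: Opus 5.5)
The plan is to condition on $X_t$ and split the one-step transition into its two branches. With probability $1-\lambda$ the next state is $U_t$, which is uniform on $\mathbb{Z}_n$ and independent of $X_t$. The contribution of this branch is $(1-\lambda)\E[\phi_j(U_t)]$. Since $\E[\phi_j(U_t)]=\frac1n\sum_{x=0}^{n-1}e^{2\pi\mathrm{i}jx/n}$ is a full geometric sum of $n$-th roots of unity, it vanishes for $j\not\equiv0 \pmod n$. This is exactly where the hypothesis $j\ne0$ (with $j\in\{1,\dots,n-1\}$) is used.

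With probability $\lambda$ the next state is $X_t+v+D_t \bmod n$. Because $\phi_j$ is a character of $\mathbb{Z}_n$, the reduction mod $n$ is harmless and
\[
\phi_j(X_t+v+D_t)=\phi_j(X_t)\,\phi_j(v)\,\phi_j(D_t).
\]
We take the persistence coin, $U_t$ and $D_t$ to be independent of $X_t$. Then the conditional expectation of this branch is $\lambda\,\phi_j(X_t)\,e^{2\pi\mathrm{i}jv/n}\,\E[\phi_j(D_t)]$. The rotation factor is $e^{2\pi\mathrm{i}jv/n}=e^{\mathrm{i}j\theta}=e^{\mathrm{i}\theta_j}$, using the definitions $\theta=2\pi v/n$ and $\theta_j=j\theta$.

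For the diffusion factor, $D_t\in\{-1,0,1\}$ with probabilities $q,1-2q,q$, so
\[
\E[\phi_j(D_t)]=(1-2q)+q\left(e^{2\pi\mathrm{i}j/n}+e^{-2\pi\mathrm{i}j/n}\right)=1-2q+2q\cos(2\pi j/n)=c_j.
\]
The value is real because the law of $D_t$ is symmetric.

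Adding the two branches gives $\E[\phi_j(X_{t+1})\mid X_t]=\lambda c_j e^{\mathrm{i}\theta_j}\phi_j(X_t)+0=\alpha_j\phi_j(X_t)$, as claimed.

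There is no real obstacle here. The only points needing care are to state the independence assumptions on $U_t$, $D_t$ and the branch choice relative to $X_t$, and to note the role of $j\ne0$. For $j=0$ the uniform branch contributes $1-\lambda$ rather than $0$, consistent with $\phi_0\equiv1$.
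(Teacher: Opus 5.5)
Your proposal is correct and follows essentially the same route as the paper's proof. You use the vanishing of the uniform-branch mean $\frac1n\sum_y\phi_j(y)=0$ for $j\ne0$, the character factorization $\phi_j(X_t+v+D_t)=\phi_j(X_t)e^{\mathrm{i}\theta_j}e^{2\pi\mathrm{i}jD_t/n}$, and the computation $\E e^{2\pi\mathrm{i}jD_t/n}=c_j$; you also state the independence assumptions, which the paper leaves implicit.
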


\begin{proof}
The expectation of a mode is $\E\phi_j=\frac1n\sum_y\phi_j(y)=0$. One update step gives
\begin{align}
\phi_j(X_t+v+D_t)=\phi_j(X_t)\,e^{\mathrm{i}\theta_j}e^{2\pi \mathrm{i}jD_t/n}
\end{align}
and 
\begin{align}
 \E e^{2\pi \mathrm{i}jD_t/n}=1-2q+q\left(e^{2\pi \mathrm{i}j/n}+e^{-2\pi \mathrm{i}j/n}\right) =c_j .
\end{align}
\end{proof}

\subsection{Derivation of the J-lens factors}\label{app: bayes-factors}
We use the notation of Section~\ref{sec: jlens_factors}: $K_j(c,\tau)=\alpha_j^{\tau+1}A_\tau$, $r_j=\abs{\alpha_j}(1-\varepsilon)$, and $N(c)$ the number of tokens until the next reveal, with $P(N=k)=(1-\varepsilon)^{k-1}\varepsilon$. Then $A_\tau=1$ for $\tau<N$ and $0$ from $N$ onward, so $\E_cA_\tau=(1-\varepsilon)^\tau$.

\paragraph{Magnitude.} Since $\abs{K_j(c,\tau)}=\abs{\alpha_j}^{\tau+1}A_\tau$,
\begin{equation}
 a^{(j)}=\abs{\alpha_j}\sum_{\tau=1}^{H}r_j^\tau
 =\abs{\alpha_j}\,\frac{r_j(1-r_j^H)}{1-r_j}.
 \label{eq:a}
\end{equation}

\paragraph{Lag coherence.} For one mode, the lag coherence is
\begin{equation}
\rho_\tau^{(j)} :=\frac{\E_c\abs{\frac1H\sum_{\tau=1}^{H} K_j(c,\tau)}}
{\E_c\frac1H\sum_{\tau=1}^{H}\abs{K_j(c,\tau)}},
\end{equation}
whose denominator is $a^{(j)}/H$. Summing the Jacobian over lags,
\begin{equation}
\sum_{\tau=1}^{H}K_j(c,\tau)=\alpha_j\,S_{j,N(c)},\qquad
 S_{j,N}:=\sum_{\tau=1}^{\min(N-1,H)}\alpha_j^\tau,
 \label{eq:lag-sum}
\end{equation}
where $S_{j,N}$ is the update accumulated over the steps until the state is next revealed. The lag coherence is therefore
\begin{equation}
\rho_\tau^{(j)} =\frac{\abs{\alpha_j}\E_c\abs{S_{j,N}}}{\abs{\alpha_j}\sum_{\tau=1}^{H}r_j^\tau} = \frac{(1-r_j)\E_c\abs{S_{j,N}}}{r_j(1-r_j^H)},
 \label{eq:rho-tau}
\end{equation}
where the expectation over the reveal time has the closed form
\begin{equation}
\E_c\abs{S_{j,N(c)}}=\sum_{k=1}^{H}(1-\varepsilon)^{k-1}\varepsilon\,\abs{S_{j,k}}
 +(1-\varepsilon)^H\abs{S_{j,H+1}}.
 \label{eq:middle}
\end{equation}

\paragraph{Context coherence.} The mean of the accumulated response is
\begin{equation}
\E_cS_{j,N(c)}=\sum_{\tau=1}^{H}\alpha_j^\tau(1-\varepsilon)^\tau =\sum_{\tau=1}^{H}\left(r_je^{\mathrm{i}\theta_j}\right)^\tau,
\qquad
\abs{\E_cS_{j,N(c)}}=\frac{r_j \abs{1 - (r_j e^{\mathrm{i}\theta_j})^H}}{ \abs{1 - r_j e^{\mathrm{i}\theta_j}}}.
 \label{eq:mean-sum}
\end{equation}
The context coherence
\begin{equation}
\rho_c^{(j)} :=\frac{\abs{\E_c\sum_{\tau=1}^{H} K_j(c,\tau)}}
 {\E_c\abs{\sum_{\tau=1}^{H} K_j(c,\tau)}}
 =\frac{\abs{\E_cS_{j,N}}}{\E_c\abs{S_{j,N}}}
 = \frac{r_j \abs{1 - (r_j e^{\mathrm{i}\theta_j})^H}}{\abs{1 - r_j e^{\mathrm{i}\theta_j}}\,\E_c\abs{S_{j,N}}}
 \label{eq:rho-c}
\end{equation}
follows by dividing Eq.~\eqref{eq:mean-sum} by Eq.~\eqref{eq:middle}.

\paragraph{Long-window limit.} \label{app: long-window}
Since $|\alpha_j| = \lambda |c_j| < 1$, the accumulated responses are bounded,
$|S_{j,k}| \le |\alpha_j|/(1-|\alpha_j|)$, so the boundary term
$(1-\varepsilon)^H |S_{j,H+1}|$ in Eq.~\eqref{eq:middle} and the factors $r_j^H$ in
Eqs.~\eqref{eq:a}, \eqref{eq:rho-tau}, \eqref{eq:mean-sum} and~\eqref{eq:rho-c} vanish geometrically as $H\to\infty$. The
magnitude and the mean accumulated response become
\begin{equation}
  a^{(j)} = \frac{|\alpha_j|\, r_j}{1-r_j}, \qquad
  \E_c S_{j,N} = \frac{r_j e^{\mathrm{i}\theta_j}}{1-r_j e^{\mathrm{i}\theta_j}},
  \label{eq:a_ES_inf}
\end{equation}
and, substituting $S_{j,k} = \alpha_j(1-\alpha_j^{k-1})/(1-\alpha_j)$ into Eq.~\eqref{eq:middle},
\begin{equation}
  \E_c |S_{j,N}|
  = \frac{\varepsilon\, |\alpha_j|}{|1-\alpha_j|}
    \sum_{k=0}^{\infty} (1-\varepsilon)^k \bigl|1-\alpha_j^{k}\bigr| .
  \label{eq:ES_abs_inf}
\end{equation}
The coherences are $\rho^{(j)}_\tau = (1-r_j)\,\E_c|S_{j,N}|/r_j$ and
$\rho^{(j)}_c = r_j / \bigl(|1-r_j e^{\mathrm{i}\theta_j}|\,\E_c|S_{j,N}|\bigr)$.
For general $\theta_j$, the modulus $|1-\alpha_j^k|$ prevents a closed form for
Eq.~\eqref{eq:ES_abs_inf}, but it cancels from their product and from the
singular value of Eq.~\eqref{eq:ranking},
\begin{equation}
  \rho^{(j)}_\tau \rho^{(j)}_c = \frac{1-r_j}{|1-r_j e^{\mathrm{i}\theta_j}|}, \qquad
  \sigma^{(j)} := a^{(j)}\,\rho^{(j)}_\tau \rho^{(j)}_c
        = |\alpha_j|\,\bigl|\E_c S_{j,N}\bigr|
        = \frac{|\alpha_j|\, r_j}{|1-r_j e^{\mathrm{i}\theta_j}|} ,
  \label{eq:weight_inf}
\end{equation}
so only the split of $\sigma^{(j)}$ between lag and context coherence requires the series.
The split is closed when $\alpha_j$ is real. Then $1-\alpha_j^k > 0$ for all $k$,
so $S_{j,k}$ has the sign of $\alpha_j$ for every $k$ and
$\E_c|S_{j,N}| = |\E_c S_{j,N}|$. For a static latent
($\theta = 0$), $\E_c|S_{j,N}| = r_j/(1-r_j)$ and
$\rho^{(j)}_\tau = \rho^{(j)}_c = 1$: the singular value is pure magnitude. For an
alternating mode ($\theta_j = \pi$), $\E_c|S_{j,N}| = r_j/(1+r_j)$,
$\rho^{(j)}_c = 1$ and $\rho^{(j)}_\tau = (1-r_j)/(1+r_j)$: the context
coherence is perfect, and the lag coherence carries the whole loss from rotation.
For intermediate $\theta_j$ both coherences are below one, with their product
fixed by Eq.~\eqref{eq:weight_inf}.

\paragraph{Singular-value switching.}
For the Bayes filter in belief units, the latents are independent and share
only the reveal, so the lens is block diagonal over latents (Section~\ref{sec: jlens_factors}) and
each latent's mode $j$ contributes a degenerate pair of singular values equal to
$\sigma^{(j)}$. Singular values of different latents therefore cross rather than avoid
each other, and the top direction of the lens switches from one latent to the
other exactly where their singular values are equal. Compare two latents, labeled $0$ and $1$ as in Fig.~\ref{fig: crossing}, on mode
$j=1$, which carries the largest singular value here. We fix
$j=1$ and index by latent $i$ instead: $\alpha_i = \lambda_i c_1 e^{\mathrm{i}\theta_i}$,
$r_i = (1-\varepsilon)|\alpha_i|$, and $\sigma_i$ is the corresponding singular value. Here $c_1$ is the diffusion factor of mode $j=1$, shared by both latents.
By Eq.~\eqref{eq:weight_inf}, the switch occurs when
\begin{equation}
  \frac{|\alpha_0|^2}{|1-r_0 e^{\mathrm{i}\theta_0}|}
  = \frac{|\alpha_1|^2}{|1-r_1 e^{\mathrm{i}\theta_1}|} .
  \label{eq:switch}
\end{equation}
Take latent~0 static ($\theta_0 = 0$) with its persistence $\lambda_0$ swept,
and latent~1 fixed. Let $\kappa := \sigma_1/(1-\varepsilon)
= |\alpha_1|^2/|1-r_1 e^{\mathrm{i}\theta_1}|$. Then Eq.~\eqref{eq:switch} is the
quadratic $|\alpha_0|^2 + (1-\varepsilon)\kappa\,|\alpha_0| - \kappa = 0$,
whose positive root gives the critical persistence
\begin{equation}
  \lambda_0^* = \frac{1}{2c_1}
  \Bigl[\sqrt{(1-\varepsilon)^2\kappa^2 + 4\kappa} - (1-\varepsilon)\kappa\Bigr] .
  \label{eq:lambda_star}
\end{equation}
This closed form needs $\theta_0 = 0$; otherwise the equation is quartic and has no simple root.
The singular value $\sigma_0$ rises monotonically from $0$ with $\lambda_0$. The static
latent therefore takes the top direction for $\lambda_0 > \lambda_0^*$ and loses
it below, and a switch exists within the sweep whenever
$\sigma_0|_{\lambda_0 = 1} > \sigma_1$. The rotating latent's handicap enters only
through $|1-r_1 e^{\mathrm{i}\theta_1}|$, the inverse of its product of coherences up to
the factor $1-r_1$, which is smallest for $\theta_1 = 0$ and grows with
$|\theta_1|$ up to $\pi$. A faster-turning latent is thus overtaken at a lower $\lambda_0^*$.

\section{Transformer Training Methods} \label{app: transformer}

Each model is a standard decoder-only transformer with causal self-attention and rotary position embeddings (RoPE; \citealp{su2024roformer}) applied to the queries and keys only during attention. The vocabulary size $V$ is defined by the
HMM: $V = 8^3 + 1 = 513$ for three latents on rings of $8$ (every combination of revealed places, plus the null token) and
$V = (1 + 2 \cdot 2^3)\, 2^3 = 136$ for the context sweep (null or plain or keyed reveal of three $\pm 1$ latents, times the
three key bits).

We write $\bm h_{\ell, t} \in \R^d$ for the residual stream at position $t$ before block $\ell + 1$: $\bm h_{0,t}$ is the token
embedding, $\bm h_{1,t}$ the stream between the two blocks, and $\bm h_{2,t}$ the output of the last block (so $L=2$), which the final LayerNorm
and $\bm W_U$ read. The J-lens is read at $\ell = 0$ and $\ell = 1$.

We trained each model with cross-entropy on next-token prediction, with AdamW \citep{loshchilov2019decoupled} for $20{,}000$ steps with learning rate $10^{-3}$ and weight decay $0.01$, on uniform batches of $64$ sequences. We compared it with two references: the Bayes filter, which is the optimal
predictor, and a memoryless predictor that uses the stationary distribution of the process.

\subsection{Data for all sweeps}

For each sweep and seed we draw $500{,}000$ training sequences of $192$ tokens from the respective HMM process, and $200$ test sequences. All sweeps use a reveal probability of
$\varepsilon = 0.2$. We ran 8 model seeds per task, jittering task parameters for each seed around a fixed mean. For the magnitude sweep, we set the number of states per latent $n=8$, velocity $v=0$, diffusion probability $q = 1/4$ and persistence $\lambda_i = 0.8, 0.9, 0.95$. For the lag sweep, we set $n=8$, $v=4,2,0$, $q = 1/4$, $\lambda = 0.9$. For the context coherence sweep we set $n=2$, $v=0$, $q=0$, $\lambda = 0.95$, and key probabilities $\psi_i = 0, 0.75, 1$.

These parameter sweeps selectively modulate magnitude, lag coherence and context coherence:

\paragraph{Magnitude sweep.} We only differ $\lambda_i$, hence all $S_N > 0$ and therefore $\mathbb{E}_c |S_N| = \mathbb{E}_c S_N$ and so $\rho_\tau = 1$, and likewise $\rho_c = 1$.  Recall that
\begin{equation}
a_i = \lambda_i c_1 \frac{r_i (1-r_i^H)}{1-r_i}
\end{equation}
with $r_i = \lambda_i c_1(1-\varepsilon)$, giving magnitude parameters of $0.82,1.23,1.50$ for $\lambda = 0.8,0.9,0.95$.

\paragraph{Lag sweep.} We have that $\lambda,c_1, \varepsilon$ are shared, hence the magnitude is fixed across all latents. Only $\theta$ varies, which varies $\alpha_i = \lambda c_1 e^{\mathrm{i}\theta_i}$, and with $S_k = \alpha_i \frac{1 - \alpha_i^{k-1}}{1 - \alpha_i}$, we have that $\rho_\tau = 1, 0.35, 0.24$ for $\theta_i = 0, \pi/2, \pi$. For $\rho_c$, the sign of $S_N$ is shared across all contexts for $\theta = 0, \pi$ and so $\rho_c = 1$. For $\pi/2$, the value is slightly less, at $\rho_c = 0.945$. 

\paragraph{Context sweep.} For the context sweep, we modified $\psi_i$ only. This keeps $\rho_\tau$ and magnitude $a$ fixed, while $\rho_c = \psi$, as $\mathbb{E}_c [W_{t-1}S_N] = \psi \mathbb{E}_c S_N$ and $\mathbb{E}|W_{t-1}S_N| = \mathbb{E}_cS_N$. 

\section{Computing alignment of J-lens vectors and latent state representations}
\label{methods: computing alignment}

We compute the Jacobian lens on $1000$ sampled sequences from the task HMM after a 32-token burn-in period, for sequences of length $192$. We computed the Jacobian only on the tokens that revealed latent states, totaling about $19{,}000$ sources. We averaged over $H=128$ lags. The Jacobian was computed with forward-mode autodifferentiation and validated with a split-halves analysis. For the Bayes filter, the input is separated by latent, and the output is the predicted distribution over each latent separately as well. 

We computed the singular value decomposition $\bm J_H = \bm U\bm\Sigma\bm V^*$ using standard linear algebra routines in Python. The left singular vectors $\bm U$ are the output directions that the Jacobian writes, while the right singular vectors $\bm V$ are the directions that the model reads. They are ordered by the singular values on the diagonal of $\bm\Sigma$.

\section{Alignment between J-lens vectors and latent vectors}\label{app: alignment}

We identify the relationship between the J-lens vectors and the latent state features by identifying an encoding and decoding subspace for each latent state mode $\phi_j(X^{(i)})$ (the Fourier mode $j$ of latent state $i$), and computing the overlap between the J-lens vector and these subspaces. Specifically, we take $n_s$ sequences and, at the last time step of each, record $p$ latent state patterns (e.g. $\Re\,\phi_1(X^{(1)})$ and $\Im\,\phi_1(X^{(1)})$, the real and imaginary parts of the first Fourier mode of latent state $1$). We stack the centered residual activity vectors into $\bm Z \in \mathbb{R}^{n_s \times d}$ and the centered patterns into $\bm\Phi \in \mathbb{R}^{n_s \times p}$, which has two columns (real and imaginary parts) for each latent mode. These are the patterns on which encoding and decoding regression is done. 

We compute the encoding direction of each pattern independently via
\begin{equation}
    \bm E = \mathrm{diag}(\bm\Phi^\top\bm\Phi)^{-1}\bm\Phi^\top\bm Z,
\end{equation}
which is a map from patterns to activity; the diagonal means that each pattern is regressed independently, rather than jointly. For a decoding analysis, we reverse the process, computing
\begin{equation}
    \bm D_i = (\bm Z^\top\bm Z + \nu \bm I)^{-1}\bm Z^\top\bm M_i,
\end{equation}
where $\nu$ is a ridge penalty and $\bm M_i\in\R^{n_s\times 2}$ holds the Bayes filter beliefs of latent $i$, for example $\E[\phi_1(X^{(i)}_t)\mid Y_{\leq t}]$, split into real and imaginary parts.

On the output side, the patterns are directly readable from the tokens. Each token represents a particular state of all latents, so we can directly align the Fourier mode patterns broadcast to the output singular vectors. That is, each output singular vector is in $\mathbb{R}^V$, and for each Fourier mode of each latent we construct a vector in $\mathbb{R}^{V}$ by setting the entry of each token to $\cos(2\pi x/n)$ (or $\sin$), where $x$ is the state of that latent in the token. Then, the encoding analysis simply comes down to projection of the J-lens singular vector onto the plane spanned by each pattern individually. 

We compute alignment by measuring how much a unit vector of the J-lens lies in the span of a group of encoding or decoding directions, which in practice are taken to be the vectors corresponding to a single mode of a latent state (typically two vectors corresponding to the sine and cosine directions). We define the overlap of a unit lens vector $\bm v$ by projecting it onto the span of the $k_i$ reference vectors corresponding to the relevant feature:
\begin{equation}
    O_i(\bm v) = \norm{\bm Q_i^\top\bm v},
\end{equation}
where the columns of $\bm Q_i$ are an orthonormal basis for the $k_i$ reference vectors, constructed with a QR decomposition. Under this construction, a random unit direction in $d$ dimensions has an expected squared projection of $k_i/d$ onto a $k_i$-dimensional subspace, making the chance level of overlap $\sqrt{k_i/d}$, with $k_i=2$ for Fourier modes and $d = 128$. We assign each J-lens direction to the latent with the largest overlap. 

\section{Validating singular value switching theory}\label{app: switching}

The switching analysis (Appendix~\ref{app: long-window}) makes two predictions. First, in the large-$H$ limit the singular value of each latent's first Fourier mode is $\sigma_i = \abs{\alpha_i} r_i / \abs{1 - r_i e^{\mathrm{i}\theta_i}}$ (Eq.~\eqref{eq:weight_inf}). Second, for a static latent~0 with swept persistence $\lambda_0$ and a fixed latent~1, the two singular values cross, and the top J-lens direction switches from latent~1 to latent~0, at the persistence $\lambda_0^*$ of Eq.~\eqref{eq:lambda_star}. We test both predictions on the Bayes filter, and the second on transformers trained across the switch.

\paragraph{Task.} We use the toy-model HMM with two unkeyed latents on rings of $n = 8$ states, diffusion probability $q = 1/4$ and reveal probability $\varepsilon = 0.2$, which gives $V = 8^2 + 1 = 65$ tokens. Latent~0 is static ($v_0 = 0$) and its persistence $\lambda_0$ is swept. Latent~1 has fixed persistence $\lambda_1 = 0.95$ and velocity $v_1 \in \{1, 2, 3, 4\}$, that is, angular velocity $\theta_1 = \pi v_1 / 4 \in \{\pi/4, \pi/2, 3\pi/4, \pi\}$, which are all the nonzero turns a ring of $8$ allows, up to direction.

\paragraph{Theory.} The predicted singular values are Eq.~\eqref{eq:weight_inf}, evaluated for each latent on mode $j = 1$. The predicted switch point is Eq.~\eqref{eq:lambda_star}, which depends only on latent~1 and the shared parameters, not on $\lambda_0$. The series for $\E_c\abs{S_{j,N}}$ (Eq.~\eqref{eq:ES_abs_inf}), needed for the individual coherences but not for $\sigma_i$, is truncated at $200$ terms.

\paragraph{Measuring the singular values.} We compute the J-lens as in Appendix~\ref{methods: computing alignment}, on $1000$ sampled sequences of $192$ tokens with a $32$-token burn-in, using only the revealed source tokens, and sum over $H = 64$ lags. The first Fourier mode of each latent is a plane (its real and imaginary parts), so each latent owns two singular directions. We assign the J-lens directions to latents by the overlap of their output side with each latent's mode-1 pattern (Appendix~\ref{app: alignment}), and take $\sigma_i$ to be the mean of the two singular values assigned to latent $i$. For the Bayes filter the two are degenerate (Appendix~\ref{app: long-window}).

\subsection{Bayes filter}

\paragraph{Singular values.} We sweep $\lambda_0 \in [0.55, 0.95]$ in steps of $0.025$ at $\theta_1 = \pi/2$, and compare the measured $\sigma_0$ and $\sigma_1$ with Eq.~\eqref{eq:weight_inf}. The same sampled sequences are reused at every $\lambda_0$, so that sampling noise does not vary along the sweep.

\paragraph{Switch point.} For each $\theta_1$ we measure $\sigma_0$ and $\sigma_1$ at seven values of $\lambda_0$, spaced $0.02$ apart and centered on Eq.~\eqref{eq:lambda_star}. We locate the crossing by linear interpolation of $\log(\sigma_0/\sigma_1)$, which increases monotonically with $\lambda_0$, and compare it with Eq.~\eqref{eq:lambda_star}.

\subsection{Transformer}

\paragraph{Training.} For each $\theta_1$ we train transformers (Appendix~\ref{app: transformer}) on a grid of $11$ persistences $\lambda_0 = \lambda_0^* + \Delta$, with $\Delta$ from $-0.24$ to $0.16$ in steps of $0.04$ and $\lambda_0^*$ from Eq.~\eqref{eq:lambda_star}. The grid is wide because the transformer's switch need not sit at the filter's. At every grid point we train $5$ models from independent seeds, $220$ models in total, and compute each model's J-lens at layers $\ell = 0$ and $\ell = 1$ on $1000$ freshly sampled sequences. The transformer's lens is in token-score units, so its singular values are on a different scale from the filter's (Section~\ref{sec: jlens_factors}). We therefore compare only the location of the crossing with the theory.

\paragraph{Estimating the crossing.} For each model we take the difference $\Delta\sigma = \sigma_0 - \sigma_1$ and compute its mean and standard deviation over seeds at each grid point. We then smooth these with a Savitzky--Golay filter (a least-squares quadratic over a window of $5$ grid points). The crossing $\hat\lambda_0^*$ is where the smoothed mean $\mu$ changes sign, found by linear interpolation on that grid step, and we propagate the spread across seeds into an uncertainty on the crossing as
\begin{equation}
    \delta\hat\lambda_0^* = \frac{s(\hat\lambda_0^*)}{\abs{\mathrm d\mu/\mathrm d\lambda_0}(\hat\lambda_0^*)} .
    \label{eq:crossing_error}
\end{equation}
We compare $\hat\lambda_0^* \pm \delta\hat\lambda_0^*$ at each layer with Eq.~\eqref{eq:lambda_star}.

\section{Supplementary Results}

\begin{figure}[!htbp]
    \centering
    \includegraphics[width=0.72\linewidth]{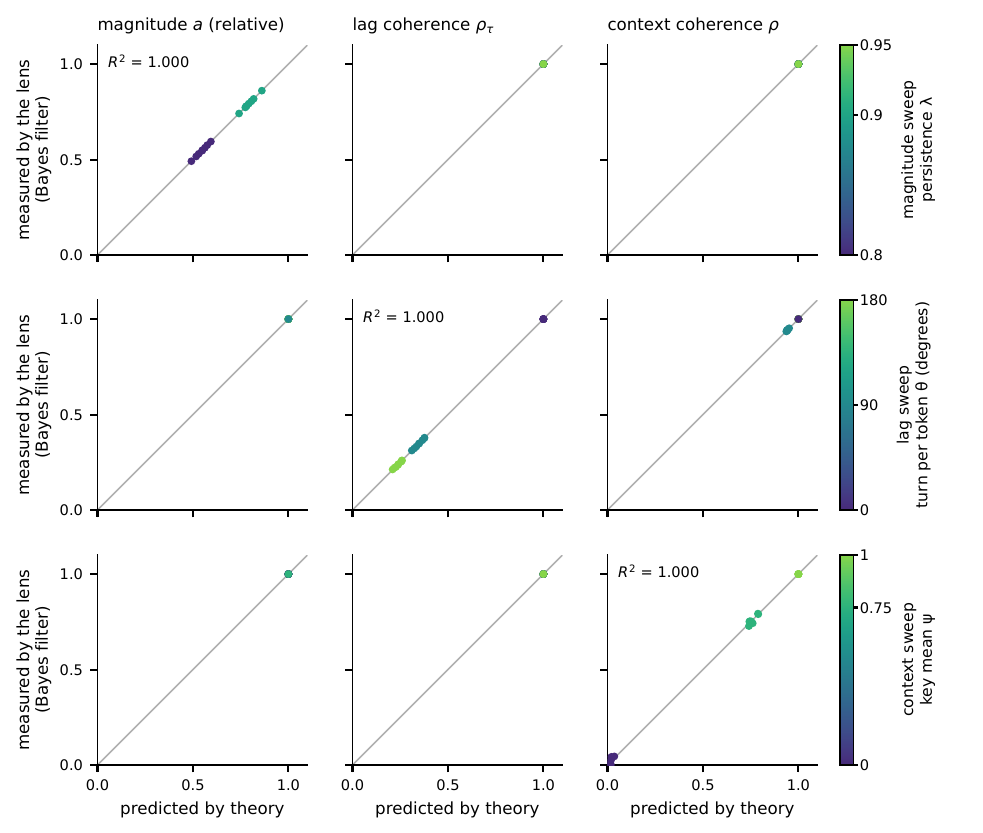}
    \caption{Validation of the J-lens theory for the Bayes filter. X-axis denotes the value predicted by the theory (Eq.~\eqref{eq:bayes-factors}), while the y-axis denotes the value measured from the lens, for magnitude $a$, lag coherence $\rho_\tau$ and context coherence $\rho_c$ (columns) in each sweep (rows, colored by $\lambda$, $\theta$ or $\psi$). There is only one visible data point for the fixed parameters in each sweep because the HMM is designed to fix these metrics in the Bayes filter.}
    \label{fig: filter_validation}
\end{figure}

\begin{figure}[!htbp]
    \centering
    \includegraphics[width=0.72\linewidth]{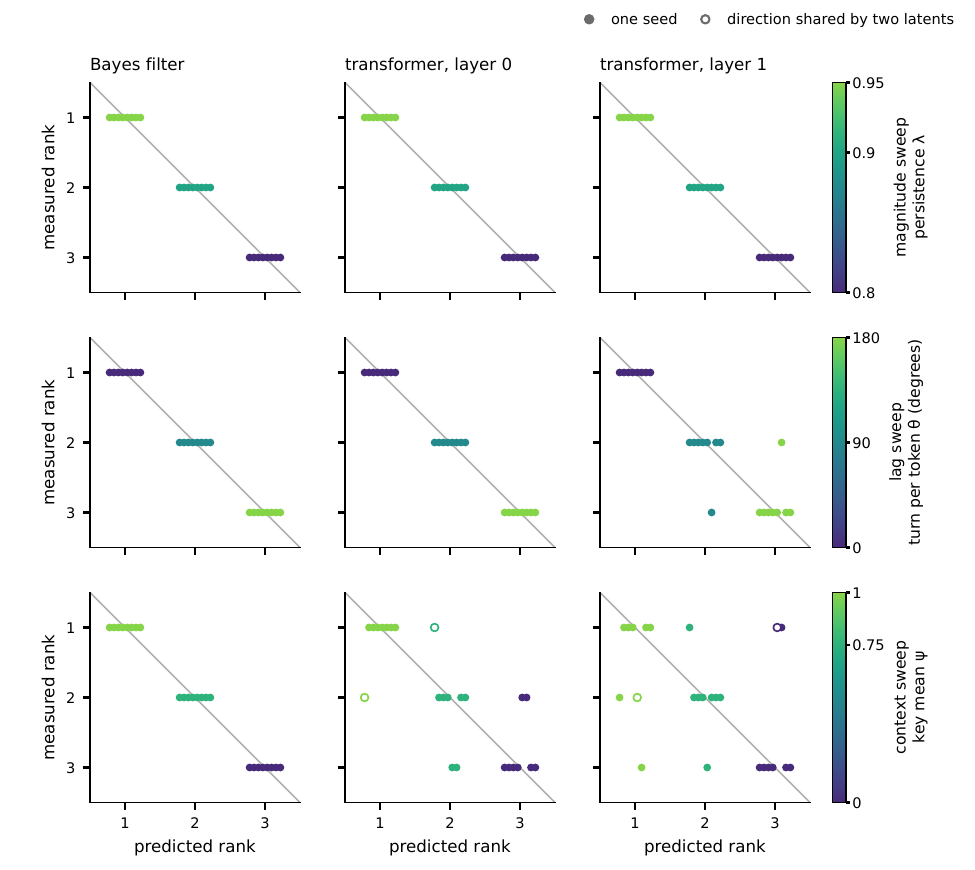}
    \caption{Validation of the J-lens theory rank prediction for the Bayes filter and the transformer. X-axis denotes the predicted rank order of the singular vectors corresponding to each particular latent (colored by their task parameter $\lambda$, $\theta$ or $\psi$), while the y-axis denotes the measured rank in the J-lens.}
    \label{fig: transformer_validation}
\end{figure}

\begin{figure}[!htbp]
    \centering
    \includegraphics[width=1\linewidth]{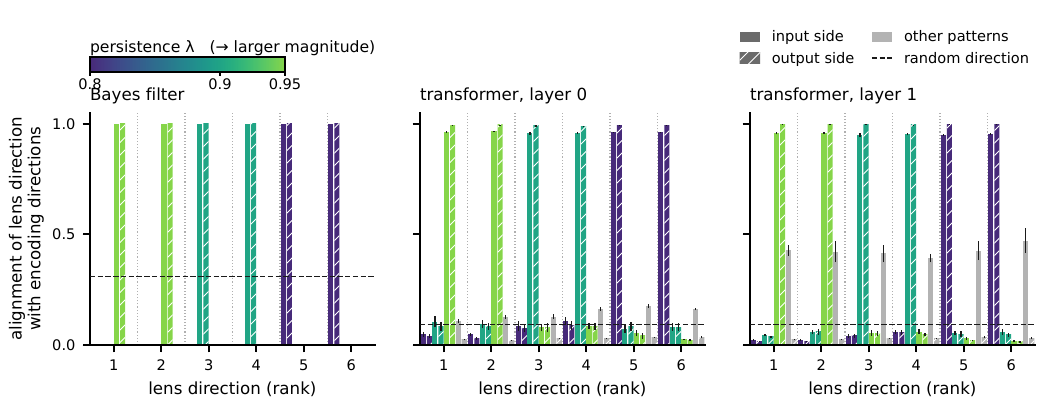}
    \caption{\textbf{Alignment of the J-lens directions with the encoding directions in the magnitude sweep.} Alignment $O_i$ of the J-lens singular vectors (solid: input side $\bm v_i$, hatched: output side $\bm u_i$), ranked by singular value, with the encoding directions $\bm E$ of each latent (colored by persistence $\lambda$) and with other patterns of the HMM (gray). Columns show the Bayes filter belief state (left), transformer layer 0 (middle) and layer 1 (right). Dashed line: alignment of a random direction.}
    \label{fig: lens_magnitude}
\end{figure}

\begin{figure}[!htbp]
    \centering
    \includegraphics[width=1\linewidth]{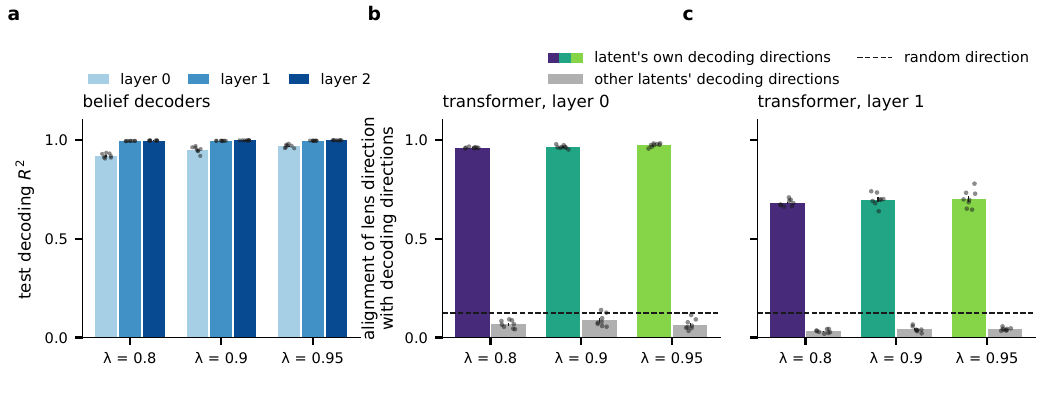}
    \caption{Alignment of the J-lens vectors with the decoding vectors for each latent in the magnitude sweep. 
    \textbf{(a)} Test decoding accuracy ($R^2$) for the Bayes filter belief state $m_t$ on each layer of the transformer, for each latent in the sweep. 
    \textbf{(b, c)} Alignment $O_i$ of the J-lens directions $\bm v_i$ with each latent's own decoding directions $\bm D_i$ (colored) and with the other latents' decoding directions (gray), for transformer layer 0 (b) and layer 1 (c). }
    \label{fig: decoding_magnitude}
\end{figure}

\begin{figure}[!htbp]
    \centering
    \includegraphics[width=1\linewidth]{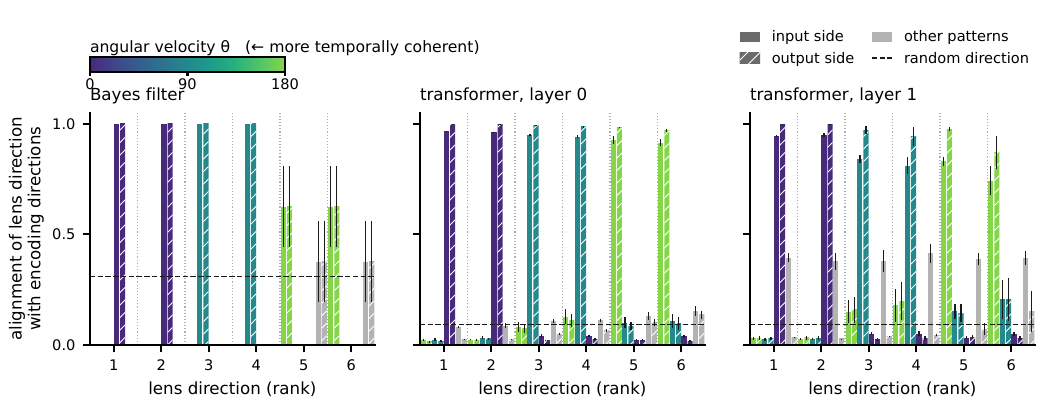}
    \caption{\textbf{Alignment of the J-lens directions with the encoding directions in the lag coherence sweep.} Latents are colored by angular velocity $\theta$. Panels as in Fig.~\ref{fig: lens_magnitude}.}
    \label{fig: lens_lag}
\end{figure}

\begin{figure}[!htbp]
    \centering
    \includegraphics[width=1\linewidth]{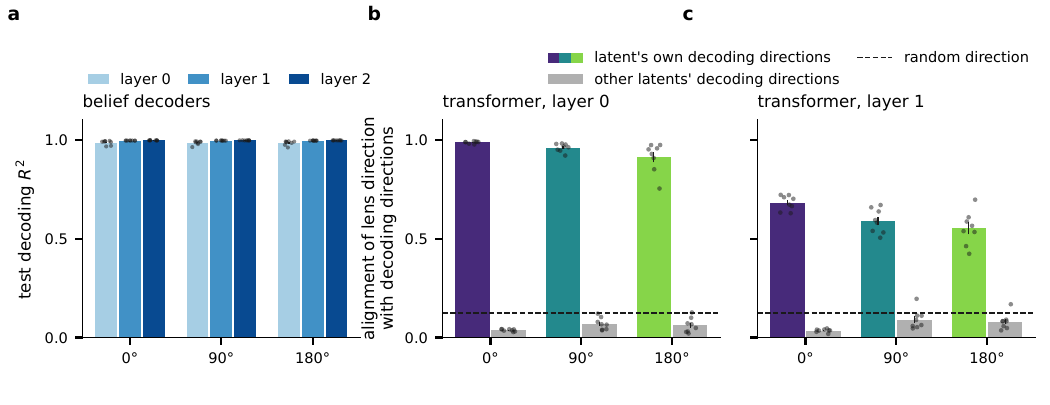}
    \caption{Alignment of the J-lens vectors with the decoding vectors for each latent in the lag coherence sweep. 
    \textbf{(a)} Test decoding accuracy ($R^2$) for the Bayes filter belief state $m_t$ on each layer of the transformer, for each latent in the sweep. 
    \textbf{(b, c)} Alignment $O_i$ of the J-lens directions $\bm v_i$ with each latent's own decoding directions $\bm D_i$ (colored) and with the other latents' decoding directions (gray), for transformer layer 0 (b) and layer 1 (c). }
    \label{fig: decoding_lag}
\end{figure}

\begin{figure}[!htbp]
    \centering
    \includegraphics[width=1\linewidth]{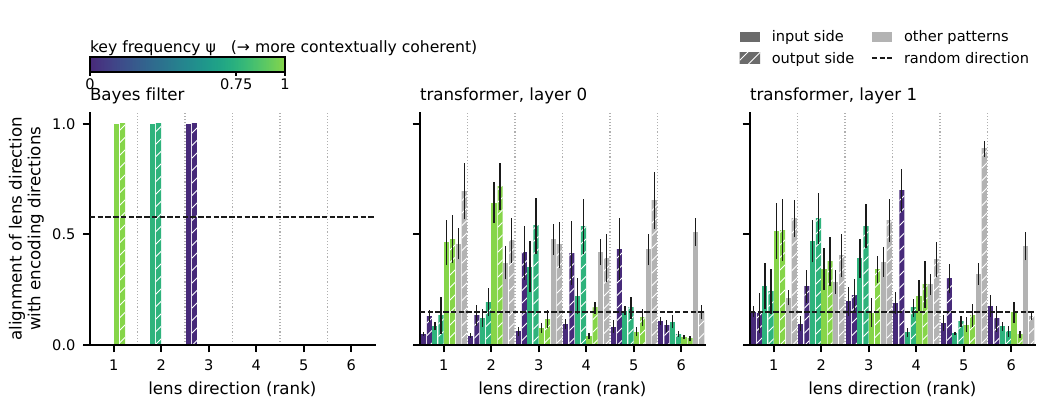}
    \caption{\textbf{Alignment of the J-lens directions with the encoding directions in the context coherence sweep.} Latents are colored by key bias $\psi$. Panels as in Fig.~\ref{fig: lens_magnitude}.}
    \label{fig: lens_context}
\end{figure}

\begin{figure}[!htbp]
    \centering
    \includegraphics[width=1\linewidth]{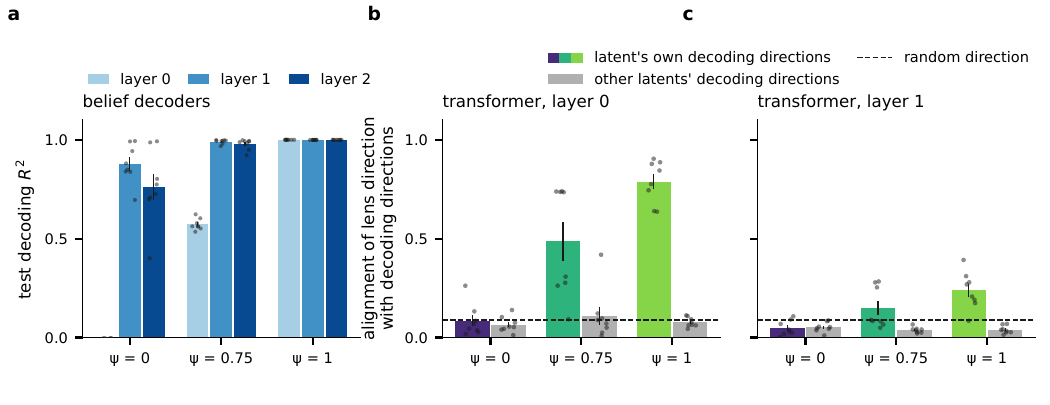}
    \caption{Alignment of the J-lens vectors with the decoding vectors for each latent in the context coherence sweep. 
    \textbf{(a)} Test decoding accuracy ($R^2$) for the Bayes filter belief state $m_t$ on each layer of the transformer, for each latent in the sweep. 
    \textbf{(b, c)} Alignment $O_i$ of the J-lens directions $\bm v_i$ with each latent's own decoding directions $\bm D_i$ (colored) and with the other latents' decoding directions (gray), for transformer layer 0 (b) and layer 1 (c). }
    \label{fig: decoding_context}
\end{figure}

\begin{figure}[!htbp]
\centering
\includegraphics[width=\linewidth]{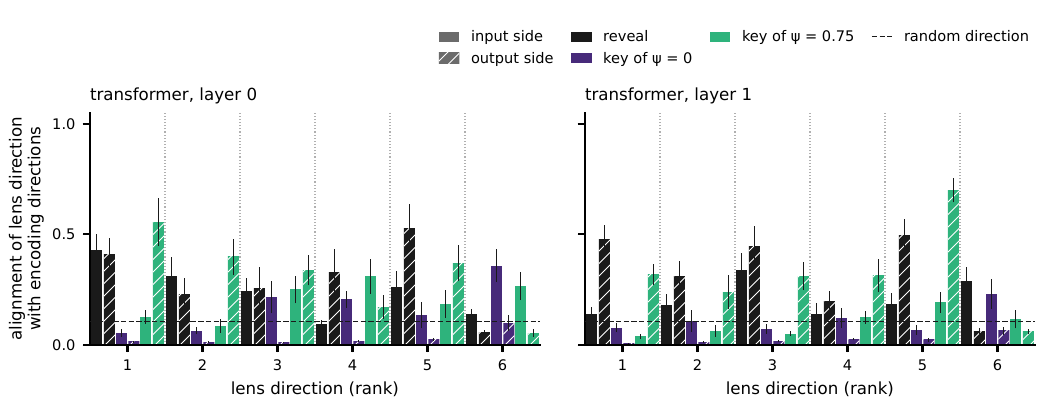}
\caption{\textbf{J-lens direction alignment with other features of the HMM that are not the latent states in the context sweep.} Black denotes the reveal state $B_t$: whether or not the latent state is revealed in the token, and the colors denote the key state $W_{t-1}$ (whether the latent state is flipped) for each key bias $\psi$.}
\label{fig: other_patterns_context}
\end{figure}

\end{document}